\definecolor{light-gray}{gray}{0.985}
\newcommand*\circled[1]{\tikz[baseline=-2pt]{
            \node[shape=circle,draw=black,thick,
    fill=light-gray,
    text=black,inner sep=2pt] (char) {#1};}}
\newtheorem{definition}{Definition}
\newtheorem{remark}{Remark}
\newtheorem{example}{Example}
\newtheorem{lemma}{Lemma}
\newcommand\rquestion[1]{
\begin{center}
\begin{tcolorbox}[colback=light-gray,every float=\centering]
\begin{minipage}{\textwidth}
#1
\end{minipage}
\end{tcolorbox}
\end{center}
}
\pgfplotsset{compat=1.15}
\title{Conflict-Aware Active Automata Learning}
\author{Tiago Ferreira \quad L{\'e}o Henry \quad Raquel Fernandes da Silva
\institute{University College London\\ London, UK}
\email{\{t.ferreira,leo.henry,raquel.silva.20\}@ucl.ac.uk}
\and
Alexandra Silva
\institute{Cornell University\\Ithaca, NY, USA}\email{alexandra.silva@cornell.edu}
}
\begin{document}

\setlength{\abovedisplayskip}{4pt}
\setlength{\belowdisplayskip}{4pt}
\setlength{\abovedisplayshortskip}{3pt}
\setlength{\belowdisplayshortskip}{3pt}

\maketitle
\begin{abstract}
Active automata learning algorithms cannot easily handle \emph{conflict} in the observation data (different outputs observed for the same inputs). This inherent inability to recover after a conflict impairs their effective applicability in scenarios where noise is present or the system under learning is mutating. 

We propose the Conflict-Aware Active Automata Learning (\CEAL) framework to enable handling conflicting information during the learning process. The core idea is to consider the so-called observation tree as a first-class citizen in the learning process. Though this idea is explored in recent work, we take it to its full effect by enabling its use with any existing learner and minimizing the number of tests performed on the system under learning, specially in the face of conflicts. We evaluate \CEAL in a large set of benchmarks, covering over 30 different realistic targets, and over 18,000 different scenarios. The results of the evaluation show that \CEAL is a suitable alternative framework for closed-box learning that can better handle noise and mutations.
\end{abstract}

\section{Introduction}
Formal methods have a long history of success in the analysis of critical systems through  abstract models. These methods are  rapidly expanding their range of applications and recent years saw an increase in industrial teams applying them to (large-scale) software~\cite{bornholt_using_2021, Cook2018, Backes2019, Chudnov2018, Le2022, Carbonneaux2022}.
The applicability of such methods is limited by the availability of good models, which require time and expert knowledge to be hand-crafted and updated. To overcome this issue, a research area on automatic inference of models, called \emph{model learning}~\cite{vaandrager_model_2017}, has gained popularity. Broadly, there are two classes of model learning: {\em passive learning}, which attempts to infer a formal model from a static log, and {\em active learning}, where interaction with the system is allowed to refine knowledge during the inference. 

In this paper, we focus on active learning, motivated by its successful use in verification tasks, e.g. in analyzing network protocol implementations, as TCP~\cite{fiterau-brostean_combining_2016}, SSH~\cite{fiterau-brostean_model_2017}, and QUIC~\cite{ferreira_prognosis_2021}, or understanding the timing behavior of modern CPUs~\cite{vila_cachequery_2020}. 
 Current state-of-the-art active learning algorithms rely on the \emph{Minimally Adequate Teacher} (MAT) framework \cite{angluin_learning_1987}, which formalizes a process with two agents: a {\em learner} and a {\em teacher}. The learner tries to infer a formal model of a system, and the teacher is omniscient of the system, being able to answer queries on potential behaviors and the correctness of the learned model. MAT assumes that the interactions between both agents are perfect and deterministic. 

\vspace{-.4cm}
\paragraph*{Learning In Practice}
Interactions with the \emph{System Under Learning} (SUL) are often non-deterministic in some way, e.g. the communications can be noisy (i.e. query answers do not only reflect the actual system output, but are instead a consequence of its interaction with the environment), or the SUL itself can change during learning. This can lead to \emph{conflicts}, which we define in the following way:
\begin{tcolorbox}[colback=light-gray]
    A \emph{conflict} appears when a query's answer formally contradicts a previous query in a way that cannot be expressed by a model of the target class.
\end{tcolorbox}
Current MAT Learners cannot handle the conflicts that arise during learning. Thus, when used in practice, MAT learner implementations use artifacts to circumvent conflicting observations. 

For example, in the case of noise, each interaction has a chance of diverging from its usual behavior. To handle this, MAT learners repeat each query $n$ times and majority-vote the result. They aim to guess an $n$ sufficiently large to prevent \emph{any} noisy observation from reaching the learner, but small enough to let the computation finish before timeout. As a consequence, noise threatens both \emph{efficiency} and \emph{correctness} of learning. We provide a framework alleviating this issue without tailoring it to specific MAT learners.

Irrespective of the nature of the conflicts detected, dealing with them requires the ability to \emph{backtrack} certain decisions that were made based on what is now considered incorrect information. This pinpoints the issue with current MAT learners: there is no notion of information storage other than the internal data structure that the learners use to build the model, which is not easily updatable in the face of conflict. This structure in fact needs to be fully rebuilt if a conflict is found, generating many superfluous (and expensive!) queries to the SUL. Separating the learning process from the information gathered through the queries allows us to \emph{retain} all the previous non-conflicting information. This alleviates the main cost of conflict handling: the unnecessary repetition of tests on the system. The Learner then only needs to rebuild its data structure based on the information already available.

\vspace{-.4cm}
\paragraph*{Contribution}
Based on the ideas above, this paper proposes the \emph{Conflict Aware Active Automata Learning} (\CEAL, pronounced \emph{seal}) framework. Any existing MAT learner 
can be used in \CEAL. When a conflict arises, we provide a method for updating the learner's internal state\,---\,without making assumptions on its data-structure\,---\,so that it remains conflict-free while removing only inconsistent information.
\begin{tcolorbox}[colback=light-gray]
	In a nutshell, this paper aims to provide classic MAT learners with a way to recover from conflicts caused by either noise or potential mutations of the system.
\end{tcolorbox}
At the heart of \CEAL is the use of an {\em observation tree}, a data structure (external to the learner) used to store information gathered from the SUL. It can be efficiently updated and used by the learner to construct its own internal data structure. When a conflict appears, we update the observation tree to reflect our knowledge, while the learner's data structure is \emph{pruned} to a conflict-free point and then expanded from the observation tree. Crucially, the learner uses \emph{the observations already stored in the tree} without requiring tests on the SUL for already observed behaviors.
\CEAL's main features are:
\begin{itemize}[leftmargin=*]
    \item The SUL is a first-class citizen, instead of being abstracted. \CEAL notably does not rely on \emph{equivalence queries}, replacing them with either a check of the stored knowledge (when sufficient) or an \emph{equivalence test}, using an \(m\)-complete testing algorithm~(e.g. the Wp-method~\cite{FBKAG91} or Hybrid-ADS~\cite{LY94,SMVJ15}).

    \item The information obtained through tests on the SUL is stored in an observation tree managed by a new \emph{Reviser} agent that is responsible for handling the conflicts and answering the learner's queries like a teacher. Providing a teacher interface is an important aspect as it enables the use of any MAT-based algorithm seamlessly, only requiring the ability to restart a classic MAT learner.
    \item The Reviser alone interacts with the SUL by means of tests meant to expand its observation tree. 
\end{itemize}
Crucially, \CEAL is less abstract than MAT, representing directly the objects and challenges of \emph{practical} active learning, while still allowing the design of Learners to enjoy the simplifying abstraction of MAT.

After some preliminaries in \Cref{sec:prelim} we formalize and prove the above claims in \Cref{sec:C3AL}. We evaluate \CEAL in \Cref{sec:exp} using a broad range of experiments\cite{neider_benchmarks_2019}. We compare several state-of-the-art algorithms (namely \lstar~\cite{angluin_learning_1987}, \kv~\cite{kearns_introduction_1994}, \ttt~\cite{bonakdarpour_ttt_2014} and \lsharp~\cite{vaandrager_new_2022}) for targets of different sizes and different levels of noise, while varying the controllable parameters for both MAT and \CEAL. The experimental results show that in the case of noise, \CEAL allows us to drastically reduce the number of repeats required to learn correct models by handling some conflicts in the information it gathers from the system. This allows \CEAL to achieve a success rate of 95.5\% compared to MAT's 79.5\% in our experiments.
\iftoggle{short}{
    
A long version of this paper, complete with the appendices presenting the proofs, some more formalization, extensive experimental results and some more discussion can be found in \cite{arxiv}. References are given when it can be useful.}{}

\section{Preliminaries}
\label{sec:prelim}
In this section, we recall Mealy machines and MAT. Fix an alphabet $A$ (a finite set of symbols). The set of finite words is denoted $A^*$, the empty word $\varepsilon$, and the set of non-empty words by $A^+$. The length of a word $w\in A^*$ is denoted $|w|$, its sets of prefixes by $\mathsf{prefixes}(w)$, its $k$-th element by $w[k]$ and the subword 
from the $i$-th to the $j$-th element by $w[i,j]$. 
The concatenation of word $w$ with symbol $a$ is denoted by $wa$. 

\vspace{-.4cm}
\paragraph*{Mealy Machines}
For the rest of the paper, we fix an input and output alphabet pair \((\Input,\Output)\). 
	A \textit{Mealy machine} over alphabets \((\Input,\Output)\) is a tuple \(\mealy=(\States,\initstate,\trans,\outf)\) where \(\States\) is a finite set of states, \(\initstate\in\States\) is the initial state, \(\trans: \States\times\Input\rightarrow \States\) is a transition function and \(\outf: \States\times\Input\rightarrow \Output\) an output function. 
    Mealy machines assign \emph{output words} ($\oword\in\Output^*$) to \emph{input words} ($\iword\in\Input^*$)\,---\,one reads input letters using $\delta$ and collects all output letters given by $\lambda$. 
    This is achieved using inductive extensions of $\trans$ and $\outf$:
	\begin{align*}
	&\trans^* \colon \States \times \Input^* \to \States  \qquad 	&&\trans^*(q,\varepsilon) = q  \qquad \trans^*(q,ia) = \trans(\trans^*(q,i),a)     \\
	&\outf^+ \colon \States \times \Input^+ \to \Output  
	\qquad&&
	 \outf^+ (q, ia) = \outf(\trans^*(q,i),a)
	\end{align*}
We now build the semantics function $\mathcal M^* \colon \Input^* \to \Output^*$ given by 
\[
\mathcal M^* (a_1 \cdots a_j) = b_1 \cdots b_j \quad \text{ where } b_k = \outf^+(q_0,a_1 \cdots a_k),\text{ for all $k=1,\ldots, j$}.
\]
Note the preservation of length of input words in output. When the functions $\delta$ and $\lambda$ are partial we call the Mealy machine $\mathcal M$ partial. A partial tree-shaped Mealy machine is called an \emph{observation tree}.

\begin{example}
\label{ex:obs-tree}
On the left below is the tree representing the tests $\{(aaa, aab), (aab, aaa), (ab, ab)\}$ and on the right the tree representing $\{(aaa, abb), (ab, ab)\}$. 
\begin{center}
{\scriptsize
    \begin{tikzpicture}[scale=.9]
        \node[ptastate] (e) at (0,0) {};
        \node[ptastate] (a) at (1.4,0) {};
        \node[ptastate] (aa) at (2.8,0) {};
            \node[ptastate] (ab) at (2.8,-1) {};
        \node[ptastate] (aaa) at (4.2,0) {};
        \node[ptastate] (aab) at (4.2,-1) {};
        \path[-{latex'}] (e)    edge node [above]  {$a / a$} (a)
                            (a)    edge node [above]        {$a/ a$} (aa)
                            (a)    edge node [below left]        {$b/ b$} (ab)
                            (aa)   edge node [above]        {$a /b$} (aaa)
                            (aa)   edge node [below left=-.2cm] {$b/ a$} (aab);
    \end{tikzpicture}	
        \qquad
    \begin{tikzpicture}[scale=.8]
        \node[ptastate] (e) at (0,0) {};
        \node[ptastate] (a) at (1.4,0) {};
        \node[ptastate] (aa) at (2.8,0) {};
            \node[ptastate] (ab) at (2.8,-1) {};
        \node[ptastate] (aaa) at (4.2,0) {};
        \path[-{latex'}] (e)    edge node [above]  {$a / a$} (a)
                            (a)    edge node [above]        {$a/ b$} (aa)
                            (a)    edge node [below left]        {$b/ b$} (ab)
                            (aa)   edge node [above]        {$a /b$} (aaa);
    \end{tikzpicture}
    }
    \vspace{-.5cm}
    \end{center}
\end{example}
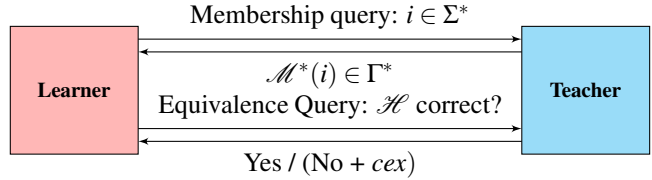
\begin{figure}[t]
\centering
\begin{tikzpicture}[scale=.85, font=\scriptsize]
		\draw[fill=learner] (0,0) rectangle (2,2);
		\draw[fill=system] (8,0) rectangle (10,2);
		\node at (1,1) {\bf Learner};
		\node at (9,1) {\bf Teacher};
		\path[-{latex'}] (2,1.8) edge node[above] {\small Membership query: $\iword \in \Input^*$} (8,1.8)
		                 (8,1.6) edge node[below] {\small $\mealy^*(\iword) \in \Output^*$} (2,1.6)
						 (2,.4) edge node[above]  {\small Equivalence Query: $\hyp$ correct?} (8,.4)
						 (8,.2) edge node[below]  {\small Yes / (No + $\mathit{cex})$} (2,.2);
\end{tikzpicture}
\caption{The Minimally Adequate Teacher framework.}\vspace*{-.4cm}
\label{fig:mat}
\end{figure}
\paragraph*{Active Model Learning} Active learning is a process in which a \emph{learner} can interact with an omniscient \emph{teacher} to build a model of an unknown system. Formally, this type of learning uses the \emph{Minimally Adequate Teacher} (MAT) framework~\cite{angluin_learning_1987} (see Fig.~\ref{fig:mat}). The teacher is supposed to have enough knowledge about the target machine \(\mealy\) to be able to answer two types of queries: 
\begin{description}
	\item[Membership] The learner sends an input word \(\iword\) to the teacher, who answers with the output word \(\mealy^*(\iword)\). 
	\item[Equivalence] The learner proposes a hypothesis model \(\hyp\). The teacher either confirms the model as correct or provides a counterexample \(\mathit{cex}\in\Input^{*}\) such that \(\hyp^*(\mathit{cex})\neq\mealy^*(\mathit{cex})\).
\end{description}

The MAT framework is an interesting abstraction to design algorithms and conduct proofs, and has been the basis for active model learning since its introduction (see e.g. \lstar~\cite{angluin_learning_1987}, \kv~\cite{kearns_introduction_1994}, \ttt~\cite{bonakdarpour_ttt_2014} or \lsharp~\cite{vaandrager_new_2022}). The teacher abstracts the system under learning (SUL), which complicates discussions on the practical interfaces between the learner and the SUL during applications. MAT does not separate the learner's core features (i.e. choosing the queries to be made and building hypotheses) from the storage of observations.
This has led the community to resort to \emph{caches}, often implemented through observation trees, to access observations directly.
Being mostly tricks to avoid repeating queries, caches are rarely discussed in the literature (although used during experiments), which had so far delayed a discussion on the practical implications of a proper handling of observations. This paper addresses this.

\paragraph*{Noise on communications} The term \emph{noise} is usually used to described a wide range (if not any form) of perturbations that can happen between the designed agent (in our case the Learner) and the SUL. 
In the case of this study we are primarily interested in the classification between \emph{input} and \emph{output} noise. 
\begin{description}
    \item[Output noise] We call output noise a perturbation that only affects what our agent sees from the world, i.e. the outputs of the SUL. Formally, this kind of noise can be represented as a non-deterministic function of \(\mealy^{*}(\iword)\) returning a different output word of same size.
    \item[Input noise] This kind of noise instead affects the query $\iword$ inputted into the SUL, so that a different input word \(\iword'\) of same size is processed instead. 
\end{description}

Noise can have different levels of \emph{structure}, being generated by different kinds of models or probability distributions. As this paper strives for a generic approach, no assumption is made on the structure of noise. Furthermore, experiments will use generic noise that has a fixed probability \emph{for each symbol} of the word, taken in sequence, to replace it with a random one according to a uniform distribution.
One notable restriction of our approach is that it does not target adversarial modifications\,---\,such as an attacker trying to change the Learner's hypotheses.
\begin{remark}
    We do not further formalize noise, as it stems for very practical considerations that may require a wide array of different formalizations. The method we propose is \emph{generic} and aims to demonstrate that paying attention to noise and \emph{conflicts} allows significant efficiency gains without any specialization towards a specific model of noise.
\end{remark}
\section{Conflict Aware Active Automata Learning}
\label{sec:C3AL}

We now introduce our alternative to MAT in practice\,---\,the \emph{Conflict-Aware Active Automata Learning} (\CEAL) framework. 
\CEAL's main features are as follows:
\begin{itemize}
    \item The SUL is a first class citizen, allowing for clearer practical discussions and modularity.
    \item The information obtained through tests on the SUL is stored in a new \emph{Reviser} agent that handles the conflicts and answers the learner's queries like a teacher.
    \item The Reviser alone interacts with the SUL by means of tests meant to expand its observation tree. The learner's queries are answered from the observation tree.
\end{itemize}

\subsection{Framework Overview}
\label{sub:overview}
\CEAL (Fig.~\ref{fig:func_view_simple}) is centered around three agents\,---\,the Learner, the System (SUL), and the Reviser\,---\,and the interfaces between them.   
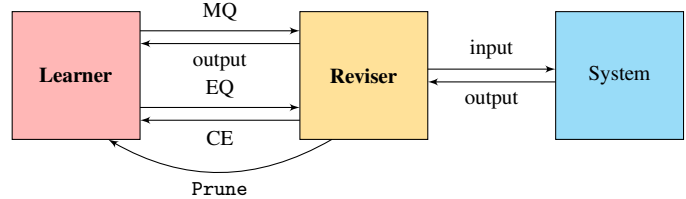
\begin{figure}[t]
    \centering
\begin{tikzpicture}[scale=.85, font=\scriptsize]
    \draw[fill=learner] (0.5,0) rectangle (2.5,2);
    \draw[fill=cache] (5,0) rectangle (7,2);
    \draw[fill=system] (9,0) rectangle (11,2);
    \node at (1.5,1) {\bf Learner};
    \node at (6,1) {\bf Reviser};
    \node at (10,1) {System};
    \path[-{latex'}] (2.5,1.7) edge node[above=0] {MQ} (5,1.7)
                     (5,1.5) edge node[below=0] { output} (2.5,1.5)
                     (2.5,.5) edge node[above=0] {EQ} (5,.5)
                     (5,0.3) edge node[below=0] {CE} (2.5,.3);
    \path[-{latex'}] (7,1.1) edge node[above=0] {input} (9,1.1)
                     (9,0.9) edge node [below=0] {output} (7,0.9);
    \path[-{latex'}] (5.5,0) edge[bend left] node [below=0] {\Restart} (2,0);
\end{tikzpicture}
\caption{Simplified view of the \CEAL framework. See Fig.~\ref{fig:MQ_implem} and~\ref{fig:EQ_implem} for more detail.}
\label{fig:func_view_simple}
\end{figure}
The {\bf Learner} plays the same general role as in MAT.  Crucially, any MAT learner can be used in \CEAL (e.g. \lstar, \kv, \ttt, \lsharp).
The Learner \emph{does not have} to store the information obtained from tests on the system. It focuses on the questions ``What is the next query to make?'' and ``How is the hypothesis built?''.
The {\bf System} is the System Under Learning, together with its environment (e.g. noise).
The {\bf Reviser} handles knowledge and conflicts. 
It answers the question ``What do we know about the system?''. It is set between the Learner and the System with interfaces to both of them. 

\CEAL is designed to improve the practical learning of reactive systems like Mealy machines. As such, it makes use of features that are core to such models, like causality and closure under inputs and outputs. However, the main ideas behind \CEAL's philosophy and separation of concerns can be adapted to learn other types of automata, such as acceptors like DFAs.

\begin{remark}
    The Reviser acts as a MAT teacher w.r.t. the Learner, answering membership and equivalence queries, with the added ability to \Restart the learner to place it in a state coherent with the Reviser's information.
    On the System's view, the Reviser acts as a tester, providing input sequences (tests) and recording the system output. The outside views of the learner and system in \CEAL are illustrated below.
    \vspace{-.2cm}
    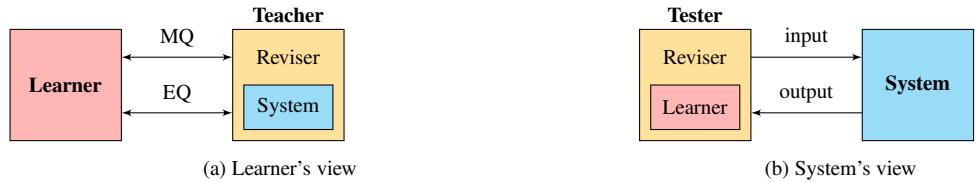
\begin{figure}[H]
    \centering
    \hfill
    \hspace{1cm}
    \begin{subfigure}{.45\textwidth}
    \begin{tikzpicture}[scale=.74,font=\scriptsize]
    
        \draw[fill=learner] (0,0) rectangle (2,2);
        \draw[fill=cache] (4,0) rectangle (6,2);
        \draw[fill=system] (4.2,0.2) rectangle (5.8,1);
        \node at (1,1) {\bf Learner};
        \node at (5,.6) {System};
        \node at (5,1.5) {Reviser};
        \node at (5,2.25) {\bf Teacher};
        \path[{latex'}-{latex'}] (2,1.5) edge node[above] {MQ} (4,1.5)
        (2,.5) edge node[above] {EQ} (4,.5);
    \end{tikzpicture}
    \caption{Learner's view}
    \label{fig:leaner_view}
    \end{subfigure}
    \hfill
    \begin{subfigure}{.45\textwidth}
    \hspace{-1cm}
    \centering
    \begin{tikzpicture}[scale=.74,font=\scriptsize]
        \draw[fill=cache] (0,0) rectangle (2,2);
        \draw[fill=learner] (.2,0.2) rectangle (1.8,1);
        \draw[fill=system] (4,0) rectangle (6,2);
        \node at (1,.6) {Learner};
        \node at (5,1) {\bf System};
        \node at (1,1.5) {Reviser};
        \node at (1,2.25) {\bf Tester};
        \path[-{latex'}] (2,1.5) edge node[above] {input} (4,1.5)
        (4,.5) edge node [above] {output} (2,.5);
    \end{tikzpicture}
    \caption{System's view}
    \label{fig:system_view}
    \end{subfigure}
    \vspace{.2cm}
    \label{fig:views}
    \caption{Reviser's interfaces}
    \end{figure}    
\end{remark}

The {\bf interfaces} on the Learner side are similar to MAT: the Learner can perform membership queries (MQ) and equivalence queries (EQ) on the Reviser, with the latter potentially resulting in a counterexample (CE). Note that the queries are sent to the Reviser and not directly to the SUL: a crucial design choice. This allows us to control the information that the Learner obtains, and reuse the information in the Reviser with no new tests.
Formally, \CEAL provides the following functions as module interfaces: 
\begin{itemize}
    \item \(\MQ: \Input^*\to (\Output^*\cup\{\Restart\})\) the membership query of the learner that the Reviser has to implement. It varies from the MAT function as the Reviser may return a command to prune the learner's state instead of an output.
    \item \(\EQ: \Mealy \to ((\Input^*\times\Output^*)\cup\{\Restart\})\) the equivalence query of the learner that the Reviser has to implement. It may return ``\Restart" instead of ``Yes".
    \item \(\System: \Input^*\to (\Input^*\times\Output^*)\) is a call to the system for a specific test. The system returns the corresponding behavior (input and output), with the effect of noise applied.
\end{itemize}
In the interface mentioned above, an $\EQ$ can never return ``Yes" as in MAT. 
This work is left to the Reviser, that will halt the learning process according to the termination criterion chosen (see Section~\ref{sub:reviser}).
The Prune signal does not require us to modify the code of a MAT Learner, as it can be implemented by restarting the Learner without requiring further access to the Learner's internals. The Reviser's caching of observations ensures that this operation does not add to the query complexity of the process.
\begin{remark}
    The main cost of learning comes from \emph{unit interactions with the system}\,---\,each individual symbol that is inputted into or outputted by the system\,---\,as these tests are generally costly to perform and that cost cannot be compensated.
\end{remark}
\subsection{The Reviser}\label{sub:reviser}
The Reviser agent is the core of the \CEAL framework. It concretizes its main idea: taking the storage and handling of observations \emph{out} of the Learner's prerogatives.
Its task is to update the observation tree $\BS$ on which a Learner is trained. We will assume the following interface is made available to the Reviser:

\begin{definition}[Operations on observation trees]\label{def:operations}
Given an observation tree $\BS$, we define the following functions to access and modify $\BS$:
\begin{itemize}
	\item $\Lookup_{\BS} : \Input^* \to (\Output^* \cup \{ \Null \})$ receives an input word $i$ and returns output $o$ if $(i,o)$ is present in the observation tree $\BS$. Otherwise, $\Null$ is returned.
	\item $\Update_{\BS} : (\Input^* \times \Output^*) \to \mathbb{2}$ updates the observation tree $\BS$ to take into account a new query pair, revoking conflicting information if necessary. Returns $\top$ if the new information conflicts with \BS.
\end{itemize}
\end{definition}

Note that the function $\Update$ is the only one that alters the tree and handles conflicts. Implementations of these functions are given in \iftoggle{short}{\cite[Appendix A]{arxiv}}{\Cref{app:trees}}.
Using these functions, we can now define the \emph{language} of an observation tree as the set of observations that it can transmit to a Learner, and provide a formal definition of a conflict as a non-additive change to the language of \BS.
\begin{definition}
Given an observation tree \BS, we call \emph{language} of \BS the following set 
\[\lang_\BS=\{(\iword,\Lookup_\BS(\iword))\in\Input^* \times \Output^* \mid \Lookup_\BS(\iword)\in\Output^* \}\ .\]
\end{definition}
\begin{definition}
    An observation \((\iword,\oword)\) \emph{conflicts} with an observation tree \(\BS\) when the tree \(\BSb\) obtained by calling \(\Update_{\BS}(\iword,\oword)\) satisfies
    \(\exists(\iword',\oword')\in\lang_\BS,\ \oword'\neq \Lookup_{\BSb}(\iword')\).
    Two observations \((\iword,\oword)\) and \((\iword',\oword')\) \emph{conflict}, written 
    \((\iword,\oword)\conflicts(\iword',\oword')\), when there is an input word 
	\(\iword''\in\pref(\iword)\cap\pref(\iword')\) such that \(\oword[|\iword''|]\neq\oword'[|\iword''|]\).
    \end{definition}
Note that a conflict appears not between the System and the Reviser, but signifies that the Reviser wants to update its answer to some information previously given to the Learner.

\begin{definition}[Reviser]\label{def:reviser}
The Reviser contains an observation tree $\BS$ and implements four operations:
\begin{itemize}
 \item[\circled 1] $\Apply_{\BS} : (\Input^* \times \Output^*) \to (\Output^*\cup\{\Restart\})$ updates \BS with the observation gained from a system test. It then either returns the query output or prunes the learner if a conflict is detected.
 \item[\circled 2] 
$\Read_{\BS} : \Input^* \to (\Output^*\cup\{\Restart\})$ looks in $\BS$ for a query answer and either returns it or tests the system if necessary. Note that if a test is performed, then \(\BS\) is updated accordingly. 
    
   \begin{minipage}[t]{0.45\textwidth}
    \vspace*{-0pt}\hspace*{-.5cm}
    \centering
	\begin{algorithm}[H]
        \caption{\(\Apply_{\BS}(i,o)\)}
        \label{alg:apply}
        \KwData{$(i,o)$ trace from the SUL.}
        \If{$\Update_{\BS}(i,o)$}{
            \Return \Restart\;
        }
        \Return{$o$}\;
    \end{algorithm}
\end{minipage}
\hfill
\begin{minipage}[t]{0.475\textwidth}
    \vspace{-0pt}
    \centering
    \begin{algorithm}[H]
        \caption{\(\Read_{\BS}(i)\)}
        \label{alg:read}
        \KwData{The queried string $i$.}
        $o \gets \Lookup_{\BS}(i)$\;
        \lIf{$o \not = \Null$}{\Return{$o$}
        }
        \Return{$\Apply_{\BS}(
        \System(i))$}\;
    \end{algorithm}	
\end{minipage}
\item[\circled 3] $\BCheck_{\BS} : \Mealy \to ( (\Input^* \times \Output^*) \cup \{ \Null \})$ performs a consistency check of a given Mealy machine hypothesis against the observation tree $\BS$. 
   Returns a counterexample if found or $\Null$ if no divergences are found.

\item[\circled 4] $\Test_{\BS} : \Mealy \to ( (\Input^* \times \Output^*) \cup \{ \Restart \})$ is the function used to look for counterexamples in the System. It takes a hypothesis proposed by the learner and coherent with the observation tree, and tests the SUL until a counterexample or a conflict is found. The tests are taken from \(\sampleWord\) which is instantiated by an off-the-shelf test suite generating algorithm (e.g. the Wp-method~\cite{FBKAG91} or Hybrid-ADS~\cite{LY94,SMVJ15}) in practice.

\begin{minipage}[t]{0.37\textwidth}
    \vspace{0pt}\hspace*{-.5cm}
    \centering
    \begin{algorithm}[H]
		\caption{$\BCheck_{\BS}(\hyp)$}
		\label{alg:check}
		\KwData{Hypothesis $\hyp$}
		\For{$(i, o) \in \BS$}{
			\If{$\hyp^*(i) \not = o$}{
				\Return{$(i, o)$}\;
			}
		}
		\Return{$\Null$}\;
	\end{algorithm}
\end{minipage}
\hfill
\begin{minipage}[t]{0.6\textwidth}
    \vspace{0pt}
    \centering
	\begin{algorithm}[H]
        \caption{\(\Test_{\BS}(\hyp)\)}
        \label{alg:rand_test}
        \KwData{A hypothesis \hyp coherent with $\BS$.
        }
        \While{$\top$}{
        $w \gets \sampleWord()$\;
        \((i,o)\gets\System(w)\)\;
            \lIf{$\Apply_{\BS}(i,o) = \Restart$}{  \Return{\Restart}
            }
            \lIf{$\hyp(i)\neq o$}{
                \Return{(i,o)}
            }
        }
    \end{algorithm}
    \end{minipage}  
\end{itemize}
\end{definition}
Crucially, the above functions rely on the observation tree's interface to handle the conflict as they arise, forwarding the \Restart command to the Learner when needed. 

\paragraph*{Update Strategies}
\label{par:updates}
At the core of dealing with conflicts is the idea of identifying information that will be sacrificed for the sake of cohesion. The way this is achieved depends largely on the type of conflict, and the \emph{meaning} of observing such a conflict. We propose two ways to resolving conflicts in \CEAL:

\smallskip
\noindent \circled{\footnotesize 1} \texttt{Most Recent}: When a conflict is identified, the most recently observed (freshest) query information is committed to the observation tree, and the previous one suppressed, if needed. This approach makes sense, for example if the target system has mutated and we are only interested in capturing the most up-to-date behavior, or as a base default strategy. We define \(\pref(\iword,\oword)=\{(\iword[1,n],\oword[1,n])\mid 0\leq n\leq |\iword|\}\).
	
\begin{restatable}{proposition}{mostrecent}
    \label{prop:inv_rec}
In the case of the \texttt{Most Recent} update strategy, given a stream of tests \(((\iword_k,\oword_k)_{k\in\bbN})\), at any step \(K\in\bbN\):   
\(
    \lang_T=
    \{\pref(\iword_k,\oword_k)\mid 0\leq k\leq K\ \wedge\ \not\exists k<l\leq K,\text{ s.t. } (\iword_k,\oword_k)\conflicts(\iword_l,\oword_l)\}
\).
\end{restatable}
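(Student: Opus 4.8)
The plan is to prove the identity by induction on $K$, exploiting the incremental nature of the \texttt{Most Recent} update. Write $T^{(K)}$ for the observation tree obtained from the empty (root‑only) tree by applying $\Update$ in turn with $(\iword_0,\oword_0),\dots,(\iword_K,\oword_K)$; abbreviate $A_k:=\pref(\iword_k,\oword_k)$; and say that $k\le K$ is \emph{alive at $K$} when no later trace conflicts with it, i.e.\ $\not\exists\, l\in(k,K]$ with $(\iword_k,\oword_k)\conflicts(\iword_l,\oword_l)$. The right-hand side of the statement is then $L(K):=\bigcup\{A_k\mid 0\le k\le K,\ k\text{ alive at }K\}$, so the goal is $\lang_{T^{(K)}}=L(K)$.

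The crucial description of one update step is that, under \texttt{Most Recent},
\[
\lang_{T^{(K+1)}}=\bigl(\lang_{T^{(K)}}\setminus C_{K+1}\bigr)\cup A_{K+1},\qquad C_{K+1}:=\{(\iword,\oword)\in\lang_{T^{(K)}}\mid (\iword,\oword)\conflicts(\iword_{K+1},\oword_{K+1})\}.
\]
This is read off the implementation of $\Update$ (\Cref{def:operations}): inserting $(\iword_{K+1},\oword_{K+1})$ walks down $T^{(K)}$ and, if there is a position $j$ where the stored output disagrees with $\oword_{K+1}$, at the first such $j$ it deletes the offending transition with its whole subtree and grafts on the remaining suffix of the new trace; otherwise nothing is deleted. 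The observations lost are exactly those whose input extends $\iword_{K+1}[1,j]$, and one checks these coincide with the members of $\lang_{T^{(K)}}$ conflicting with $(\iword_{K+1},\oword_{K+1})$ (any shared conflicting prefix must lie at a position $\ge j$, hence routes through the deleted transition); meanwhile all prefixes of $(\iword_{K+1},\oword_{K+1})$ are present afterwards. In particular the new trace conflicts with the tree iff $C_{K+1}\neq\emptyset$, matching the definition of a conflict with an observation tree.

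Two elementary facts about $\pref$ and $\conflicts$ complete the toolkit. \emph{Propagation:} if $(\iword,\oword)\in A_k$ and $(\iword,\oword)\conflicts(\iword_l,\oword_l)$, then $(\iword_k,\oword_k)\conflicts(\iword_l,\oword_l)$, since a conflicting common prefix of $(\iword,\oword)$ and $(\iword_l,\oword_l)$ is also one of $(\iword_k,\oword_k)$ and $(\iword_l,\oword_l)$. \emph{Resurrection:} if $(\iword,\oword)\in A_k$, $(\iword_k,\oword_k)\conflicts(\iword_l,\oword_l)$ and $(\iword,\oword)\not\conflicts(\iword_l,\oword_l)$, then $(\iword,\oword)\in A_l$; writing $(\iword,\oword)=(\iword_k[1,n],\oword_k[1,n])$ and taking $p$ the least position of disagreement on the common prefix of $(\iword_k,\oword_k)$ and $(\iword_l,\oword_l)$, the non-conflict hypothesis forces $n<p$, and below $p$ the two traces agree on inputs and outputs, so $(\iword,\oword)=(\iword_l[1,n],\oword_l[1,n])$. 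The induction is then direct. Base case ($K=0$): the first update, on the root-only tree (language $\{(\varepsilon,\varepsilon)\}\subseteq A_0$), produces no conflict, so $\lang_{T^{(0)}}=\{(\varepsilon,\varepsilon)\}\cup A_0=A_0=L(0)$. Step: assume $\lang_{T^{(K)}}=L(K)$ and observe $L(K+1)=A_{K+1}\cup\bigcup\{A_k\mid k\text{ alive at }K,\ (\iword_k,\oword_k)\not\conflicts(\iword_{K+1},\oword_{K+1})\}$. For $L(K+1)\subseteq\lang_{T^{(K+1)}}$: elements of $A_{K+1}$ lie on the grafted path, and an element of such an $A_k$ lies in $L(K)=\lang_{T^{(K)}}$ and, by the contrapositive of Propagation, outside $C_{K+1}$, so it survives. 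For $\lang_{T^{(K+1)}}\subseteq L(K+1)$: elements of $A_{K+1}$ are in $L(K+1)$; an element of $\lang_{T^{(K)}}\setminus C_{K+1}$ not in $A_{K+1}$ lies in some $A_k$ with $k$ alive at $K$, and were $(\iword_k,\oword_k)\conflicts(\iword_{K+1},\oword_{K+1})$, Resurrection would place it in $A_{K+1}$, a contradiction, so $k$ is still alive at $K+1$.

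The load-bearing steps are (i) pinning down the one-step formula from the tree implementation, the delicate point being that the set of observations an $\Update$ deletes coincides exactly with the set of observations conflicting with the inserted trace, and (ii) the Resurrection lemma, which is what reconciles the incremental tree update with the ``no later conflict'' global description: when a fresh trace overwrites an older one, the two still share their common non-conflicting prefix, which is therefore retained (via the newly grafted path) rather than lost. Everything else is bookkeeping.
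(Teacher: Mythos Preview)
Your proof is correct and follows the same inductive approach as the paper: derive the one-step update formula $\lang_{T^{(K+1)}}=(\lang_{T^{(K)}}\setminus C_{K+1})\cup A_{K+1}$ from the \texttt{Most Recent} implementation and induct on $K$. You actually go further than the paper's own proof, which stops at the one-step formula and asserts the result; your Propagation and Resurrection lemmas make explicit the passage from that recurrence to the global ``no later conflict'' characterisation, which the paper leaves tacit.
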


\begin{example}
In \Cref{ex:obs-tree}, the right-hand tree is the result of observing \((aaa, abb)\) starting from the left-hand tree. Notice that the sets prefixes of the sets of observations in \Cref{ex:obs-tree} verify \Cref{prop:inv_rec}.
\end{example}

\noindent \circled{\footnotesize 2} \texttt{Most Frequent}: When two possible output sequences conflict for a given input sequence,
the most frequently observed one is returned to the Learner. This information can be obtained passively by keeping track of naturally occurring repetitions of queries, or actively by specifying a sample size on which the frequency is estimated. This approach makes sense for example for conflicts that are due to unwanted statistical noise in the observations.

We define 
\(\Count(\iword,\oword)=|\{k\mid (\iword,\oword)\in\calP(\pref)(\{(\iword_k,\oword_k)_{k\in K}\})\}|\) as the number of observations of which \((\iword,\oword)\) is a prefix in an observation stream \((\iword_k,\oword_k)_{k\in\bbN}\) considered at step $K$.

\begin{restatable}{proposition}{mostfrequent}
    \label{prop:inv_freq}
    In the case of the \texttt{Most Frequent} update strategy, given a stream of tests \(((\iword_k,\oword_k)_{k\in\bbN})\), at any step \(K\in\bbN\):   
\begin{align*}
\mathit{mf}((\iword_k,\oword_k), (\iword_l,\oword_l)) &\triangleq \Count(\iword_k,\oword_k)<\Count(\iword_l,\oword_l)\ \vee \ (\Count(\iword_k,\oword_k)=\Count(\iword_l,\oword_l)\ \wedge\ k<l)\\
\lang_\BS
&=\{\pref(\iword_k,\oword_k)\mid k\leq K\ \wedge\ \not\exists k<l\leq K,\text{ s.t. } (\iword_k,\oword_k)\conflicts(\iword_l,\oword_l)\ \wedge \mathit{mf}((\iword_k,\oword_k), (\iword_l,\oword_l))\}\   
\end{align*}
\end{restatable}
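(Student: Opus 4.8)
The plan is to prove the claimed equality by induction on the number $K$ of tests consumed from the stream, following the same skeleton as the proof of \Cref{prop:inv_rec} but threading the additional $\mathit{mf}$ guard through every case. I write $\BS_K$ for the observation tree after $\Apply$ has processed $(\iword_1,\oword_1),\dots,(\iword_K,\oword_K)$, $\Count^{(K)}$ for the map $\Count$ evaluated at step $K$, and $R_K$ for the right-hand side of the statement evaluated at step $K$; the target is $\lang_{\BS_K}=R_K$ for all $K$. I use the \texttt{Most Frequent} implementation of $\Update$ from \iftoggle{short}{\cite[Appendix A]{arxiv}}{\Cref{app:trees}}, which I abstract as follows: every node of $\BS$ records, per outgoing input label, the number of stream tests ever routed through the corresponding child (including ones later discarded); on processing $(\iword,\oword)$, $\Update$ increments these counters along $\iword$ and, at the first node where $\oword$ asks for a child other than the one currently present, retains the child of strictly larger updated count, with ties broken toward the fresher observation.

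\emph{Base case and one step.} For $K=0$ the tree holds only $(\varepsilon,\varepsilon)$, and since $\pref$ always contains $(\varepsilon,\varepsilon)$ and no index can be excluded for want of a later conflicting one, $R_0=\{(\varepsilon,\varepsilon)\}$ as well. For the step, assume $\lang_{\BS_K}=R_K$ and let $\BS_{K+1}$ result from $\Apply_{\BS_K}(\iword_{K+1},\oword_{K+1})$. Walking $\iword_{K+1}$ from the root there is at most one node $v$ where $\oword_{K+1}$ disagrees with an edge already in $\BS_K$ (before $v$ the word follows a path of $\BS_K$, after $v$ it leaves the tree or follows only freshly created edges), so exactly one of the following occurs: (i) no disagreement and $\iword_{K+1}$ already lies inside $\BS_K$, so only counters change and $\lang_{\BS_{K+1}}=\lang_{\BS_K}$; (ii) no disagreement but $\iword_{K+1}$ extends a branch, so $\lang_{\BS_{K+1}}=\lang_{\BS_K}\cup\pref(\iword_{K+1},\oword_{K+1})$; (iii) a disagreement at $v$, whereupon $\Update$ compares the updated counts of the two competing children and keeps the winner, so that every observation strictly below the losing child leaves the language, the shared part down to $v$ survives (still carried by the winning side), and the part of $\pref(\iword_{K+1},\oword_{K+1})$ below $v$ enters the language iff the new child wins.

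\emph{Matching the two sides.} Rewrite $R_{K+1}$ with $\Count^{(K+1)}$ and observe that $\Count^{(K+1)}=\Count^{(K)}$ except for an increment of one on exactly the pairs of $\pref(\iword_{K+1},\oword_{K+1})$; this bookkeeping is what makes the two sides move in lockstep. In cases (i)--(ii) the change is purely additive, the $\mathit{mf}$ guard is vacuous, and the argument is that of \Cref{prop:inv_rec}. In case (iii), the observations strictly below the losing child of $v$ are exactly the tests $(\iword_k,\oword_k)$ ($k\le K+1$) that conflict with the winning continuation at $v$; by the induction hypothesis the continuation already present in $\BS_K$ was the $\mathit{mf}$-dominant survivor at step $K$ among the tests routed through $v$ with the matching output history, so the single vote carried by test $K+1$ flips the resolution precisely when the $\mathit{mf}$ order, re-evaluated with $\Count^{(K+1)}$, demands it. An element-wise comparison of $\lang_{\BS_{K+1}}$ against $R_{K+1}$ then closes the induction.

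\emph{Expected main obstacle.} The crux is case (iii): matching the purely local, single-node decision taken by $\Update$ with the global quantifier ``$\not\exists k<l\le K$'' in $R_{K+1}$. Three points require care. First, $\Count$ is a statistic of the entire stream so far rather than of the current tree, so one must justify that the counters faithfully track it --- notably for branches the tree has already discarded --- and that no vote is lost on discard. Second, the index tie-break $k<l$ in $\mathit{mf}$ must be reconciled with the ``fresher wins'' tie-break in $\Update$; one has to check that test $K+1$ sitting on the challenging side makes the two agree, and, more generally, that restricting the quantifier to later indices together with the count-primary order $\mathit{mf}$ reproduces exactly which side $\Update$ keeps. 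Third, the tree's subtree semantics is all-or-nothing, so one must verify that dropping a losing subtree coincides with removing $\bigcup_k\pref(\iword_k,\oword_k)$ over precisely the dominated $k$ --- no less (shared prefixes are re-supplied by the winner) and no more. With these settled, the remaining verifications are routine.
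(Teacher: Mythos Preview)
Your proposal rests on an incorrect abstraction of the \texttt{Most Frequent} implementation. You model $\Update$ as keeping a single child per input label and \emph{discarding} the losing subtree when a fresher observation outvotes it, then worrying about whether counters survive the discard. But in the paper's implementation (Algorithm~\ref{alg:mfupdate} in \Cref{app:trees}) the tree is a \emph{non-deterministic} Mealy machine: $\Update$ never prunes anything; it either increments the weight of an existing child or adds a new sibling, and $\Lookup$ navigates by always picking the child of maximal weight via \textsc{nextstate}. The language is therefore determined by which branch is currently heaviest, not by which branch physically survives. This is exactly what the paper's proof exploits (``all observations are stored together in $\BS$ by \Update without pruning'').

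This misreading breaks your case analysis. Your trichotomy (i)/(ii)/(iii) assumes a deterministic tree with at most one point of disagreement, but in the actual structure the new observation $(\iword_{K+1},\oword_{K+1})$ may follow a branch that already exists yet is \emph{not} the one $\Lookup$ would take. The relevant event is not ``a node $v$ where the outputs disagree'' but ``a node on the main (Lookup-reachable) branch where the increment flips which child has maximal weight''; this is governed by the $\mathit{mainbranch}$ flag in Algorithm~\ref{alg:mfupdate}, which you do not mention. Concretely, after tests $(ab,xp),(a,y),(a,y),(ac,xq)$ your model discards the $y$-subtree at step~4 (counts tied, fresher wins) and puts $(a,x),(ac,xq)$ into $\lang_{\BS_4}$, whereas the actual algorithm sets $\mathit{mainbranch}\gets\bot$ when it descends into the non-maximal $x$-child, so the increment to $x$ never triggers a language change and $(a,y)$ remains. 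Your ``expected main obstacle'' about votes surviving discard is thus moot, and the real obstacle --- tracking when $\textsc{nextstate}$ flips on the main branch, and matching that to the $\mathit{mf}$ predicate --- is not addressed.
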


We present implementations of \Update and \Lookup fitting these two strategies in \iftoggle{short}{\cite[Appendix A]{arxiv}}{\Cref{app:trees}}, and proofs of the above properties in \iftoggle{short}{\cite[Appendix B]{arxiv}}{\Cref{app:proof}}.

\begin{remark}
    An observation \((\iword,\oword)\) conflicting with an observation tree \(\BS\) implies that \((\iword,\oword)\conflicts(\iword',\oword')\) for some \((\iword',\oword')\in\lang_\BS\). The other implication is not always true, e.g. for the \texttt{Most Frequent} update strategy.
\end{remark}

\paragraph*{Termination} The termination criteria of \CEAL are the same as those used in MAT in practice: in our experiment, we terminate when our currently selected hypothesis has survived for a fixed number of tests that is deemed sufficient, or if a predefined limit number of queries is reached.

\paragraph*{Hypothesis Selection}
Active automata learning involves the production of a sequence of hypotheses that are refined over time, with the goal of converging towards a correct one. As such, a key characteristic of different approaches to automata learning is how a final model is to be selected, out of the many hypotheses. In the case of MAT this is simple: Learning produces a sequence of ever more accurate models, until termination occurs with a positive equivalence query. It is then logical to pick the most recently produced hypothesis as the final model. However, when dealing with conflicts and different update strategies, this is no longer necessarily the case for \CEAL. In particular, when it comes to electing a model out of a sequence of hypothesis, \CEAL has two options:

\begin{itemize}[leftmargin=*]
	\item \texttt{Most Recent}: This hypothesis selection strategy is the one known classically: the most recently produced hypothesis is the one to be elected as final. This strategy is sensible in the case of learning with no noise, or in the case of learning targets that evolve over time.
	\item \texttt{Most Frequent}: In this selection strategy, the sequence of hypotheses is analyzed to elect a final model. We count the frequency of each unique model (up to language equivalence) over the sequence, and elect the most frequently occurring one. This strategy makes sense when dealing with noise, as we may be producing (rarely) hypotheses that capture noisy behavior that is fixed over time. As such, we want to select not the latest model produced, but the one that is the most stable. This strategy can be implemented efficiently in practice (using hash fingerprints and counters, for example) and on-the-fly during learning, allowing us to not have to store the whole sequence of hypotheses as it is produced.
\end{itemize}

\subsection{Interface Implementation}\label{sub:impl} 

We now explain how to build the interface described in Sec.~\ref{sub:overview} using the Reviser. 
This mostly amounts to implementing membership and equivalence queries, as the testing interface is simply composed of calls to \(\System\). 
{\bf Membership} queries can be defined, for $i\in \Input^*$, as \(\MQ(i)=\Read_{\BS}(i)\). 
When $\BS$ does not have the answer to this particular query, \(\Read_{\BS}\) sends it through to the SUL (with the call to \(\System\)) and the result is applied in $\BS$. This process is illustrated in Fig.~\ref{fig:MQ_implem}.
\begin{figure}[H]
\centering\vspace*{-.4cm}
 \begin{tikzpicture}[scale=.85]
    \draw[fill=learner] (0,0) rectangle (2,2);
    \draw[fill=cache] (4,0) rectangle (6.1,2);
    \draw (4.2,1.2) rectangle (5.9,1.8);
    \draw (4.2,0.2) rectangle (5.9,.8);
    \draw[fill=system] (8,0) rectangle (10,2);
    \node at (1,1) {\bf Learner};
    \node at (5,2.25) {\bf Reviser};
    \node at (5.05,1.5) {$\Read_{\BS}$};
    \node at (5.05,.5) {$\Apply_{\!\BS}$};
    \node at (9,1) {\bf System};
    \node at (3,1) {o};
    \path[-{latex'}] (2,1.6) edge node[above] {$i$} (4.2,1.6)
                     (4.2,1.4) edge[out=180,in=0] (2,1)
                     (4.2,0.6) edge[out=180,in=0] (2,1)
                     (4.2,0.4) edge node[below] {$\Restart$} (2,.4);
    \path[-{latex'}] (5.9,1.5) edge[out=0,in=180] node[above] {$i$} (8,1.5)
                     (8,0.5) edge[out=180,in=0] node[below] {$(i,o)$} (5.9,0.5);
\end{tikzpicture} 
\caption{Implementation of Membership Queries (\MQ) in the Reviser.}
\label{fig:MQ_implem}  
\end{figure}
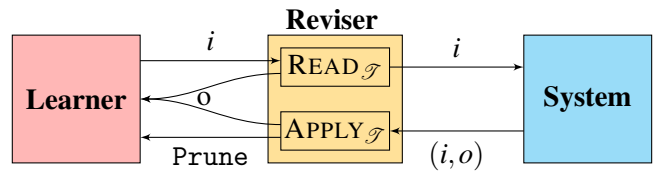

\vspace*{-.3cm}
\noindent{\bf Equivalence} queries are handled in two steps. First, the hypothesis given by the learner is checked against the observation tree using \(\BCheck_{\BS}\). If a counterexample is found, it is returned. Otherwise, the Reviser tests the System to discover new information and update the tree. 
If a counterexample is found, either it is returned to the learner or, if a conflict arose, the learner is pruned. (Fig.~\ref{fig:EQ_implem}).
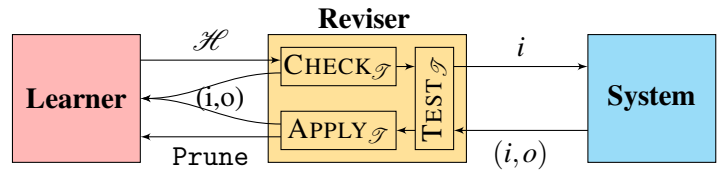
\begin{figure}[H]
\centering\vspace*{-.4cm}
    \begin{tikzpicture}[scale=.85]
        \draw[fill=learner] (0,0) rectangle (2,2);
        \draw[fill=cache] (4,0) rectangle (7.1,2);
        \draw (4.2,1.2) rectangle (6,1.8);
        \draw (4.2,0.2) rectangle (6,.8);
        \draw (6.3,0.2) rectangle (6.9,1.8);
        \draw[fill=system] (9,0) rectangle (11,2);
        \node at (1,1) {\bf Learner};
        \node at (5.5,2.25) {\bf Reviser};
        \node at (5.1,1.5) {$\BCheck_{\!\BS}$};
        \node at (5.1,.5) {$\Apply_{\!\BS}$};
        \node at (6.6,1) {\rotatebox{90}{$\Test_{\BS}$}};
        \node at (10,1) {\bf System};
        \node at (3.25,1) {(i,o)};
        \path[-{latex'}] (6,1.5) edge (6.3,1.5)
                         (6.3,0.5) edge (6,0.5); 
        \path[-{latex'}] (2,1.6) edge node[above] {\hyp} (4.2,1.6)
                         (4.2,1.4) edge[out=180,in=0] (2,1)
                         (4.2,0.6) edge[out=180,in=0] (2,1)
                         (4.2,0.4) edge node[below] {$\Restart$} (2,.4);
        \path[-{latex'}] (6.9,1.5) edge[out=0,in=180] node[above] {$i$} (9,1.5)
                         (9,0.5) edge[out=180,in=0] node[below] {$(i,o)$} (6.9,0.5);
    \end{tikzpicture}  
\caption{Implementation of Equivalence Queries (\EQ) in the Reviser.}
\label{fig:EQ_implem}  
\end{figure}

\begin{remark}[Modularity]
We present \CEAL in the case of \emph{black-box} learning where we do not have access to any information from the system but can interact with it. However, the framework is fully modular, as can be seen from the high-level functions presented in Section~\ref{sub:reviser}: one can interface any model-checking (or other) method before the calls to system in \(\Test_{\BS}\) and \(\Read_{\BS}\) when related models (specifications, parts of the System\dots) are available, allowing \CEAL to perform gray-box or even white-box learning. \CEAL focuses on the \emph{storage} of information, without restrictions on its acquisition.
\end{remark}

\paragraph*{Correctness}
\Cref{prop:inv_rec} and \Cref{prop:inv_freq} characterize the language of the Reviser, and the following results describe its interactions with the Learner and the System (as proved in \iftoggle{short}{\cite[Appendix B]{arxiv}}{\Cref{app:proof}}).

\begin{restatable}{lemma}{BSandLang}
    \label{lem:BSandLang}
    During an execution of \CEAL, all tests on the System are integrated in \(\BS\) through \(\Update_\BS\), and the Learner queries are answered according to \(\lang_\BS\).
\end{restatable}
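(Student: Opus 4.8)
The plan is to treat \Cref{lem:BSandLang} as a structural invariant of the Reviser's control flow and to prove it by a case analysis over the code paths of \Cref{alg:apply,alg:read,alg:check,alg:rand_test} together with the interface definitions \(\MQ(i)=\Read_\BS(i)\) and the two-step implementation of \(\EQ\) (\(\BCheck_\BS\), then possibly \(\Test_\BS\)). Concretely there are exactly two things to establish: (i) every invocation of \(\System\) feeds its returned trace into \(\Update_\BS\); and (ii) every value handed back to the Learner\,---\,an output word for an \(\MQ\), a counterexample (or \(\Restart\)) for an \(\EQ\)\,---\,is consistent with the value of \(\lang_\BS\) at the moment the answer is produced. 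The whole argument is then closed by a straightforward induction on the number of Learner--Reviser interactions.

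For (i) I would first record that, by \CEAL's separation of concerns (\Cref{fig:func_view_simple}), the Learner has no channel to the System: only the Reviser calls \(\System\). Then \(\System(\cdot)\) occurs syntactically in exactly two places: inside \(\Read_\BS\) (\Cref{alg:read}) as \(\Apply_\BS(\System(i))\), and inside \(\Test_\BS\) (\Cref{alg:rand_test}) as \((i,o)\gets\System(w)\) immediately followed by \(\Apply_\BS(i,o)\). In both cases the trace flows through \(\Apply_\BS\), whose body (\Cref{alg:apply}) unconditionally evaluates \(\Update_\BS(i,o)\) before returning. Hence every test performed on the System is integrated into \(\BS\) through \(\Update_\BS\), which is the first half of the claim.

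For (ii) I would split on the query type. An \(\MQ(i)\) returns \(\Read_\BS(i)\), which is either \(\Lookup_\BS(i)\) when the latter is not \(\Null\)\,---\,and then \((i,\Lookup_\BS(i))\in\lang_\BS\) by the very definition of \(\lang_\BS\)\,---\,or \(\Apply_\BS(\System(i))\). In the latter case \(\Apply_\BS\) either returns \(\Restart\) (when \(\Update_\BS(i,o)=\top\), i.e. a conflict) or returns \(o\); and when it returns \(o\) we have \(\Update_\BS(i,o)=\bot\), so by the specification of \(\Update_\BS\) in \Cref{def:operations} the pair \((i,o)\) has been incorporated with no previously stored answer revoked, whence \(\Lookup_\BS(i)=o\) and \((i,o)\in\lang_\BS\) in the post-state. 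For an \(\EQ(\hyp)\) the Reviser first runs \(\BCheck_\BS(\hyp)\) (\Cref{alg:check}), which only ever returns a pair \((i,o)\) ranging over the contents of \(\BS\)\,---\,hence a pair of \(\lang_\BS\)\,---\,additionally satisfying \(\hyp^*(i)\neq o\), so it is a genuine counterexample relative to \(\lang_\BS\); if \(\BCheck_\BS\) returns \(\Null\) the Reviser runs \(\Test_\BS(\hyp)\) (\Cref{alg:rand_test}), which returns a pair \((i,o)\) only on the branch where \(\Apply_\BS(i,o)\neq\Restart\), so again \(\Update_\BS(i,o)=\bot\) and \((i,o)\in\lang_\BS\) afterwards, while on the conflicting branch it returns \(\Restart\). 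Finally, upon receiving \(\Restart\) the Learner is relaunched and re-issues its queries through the same \(\MQ\)/\(\EQ\) entry points, so the invariant is preserved across restarts.

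The routine parts are the enumeration of call sites and the per-branch bookkeeping. The one step that needs genuine care is the link between the boolean returned by \(\Update_\BS\) and the resulting state of \(\BS\)\,---\,namely that \(\Update_\BS(i,o)=\bot\) guarantees both \((i,o)\in\lang_\BS\) afterwards and that no observation previously in \(\lang_\BS\) has changed. This is exactly the contract of \Cref{def:operations}, made precise by the \texttt{Most Recent} and \texttt{Most Frequent} implementations of \(\Update\) and \(\Lookup\) in the appendix and by the definition of when an observation conflicts with \(\BS\); so here the obstacle is to invoke that contract correctly rather than to reprove it. I expect this to be the main subtlety; the only other point worth stating explicitly is the global "the Reviser mediates all System access" property, which is structural to the framework rather than hidden inside any single algorithm.
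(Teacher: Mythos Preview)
Your proposal is correct and mirrors the paper's own decomposition: the paper splits the lemma into \Cref{lem:BSUpdate} (your part (i)) and \Cref{lem:QueryLang} (your part (ii)), proves the former by exactly your call-site enumeration of \(\System\) inside \(\Read_\BS\) and \(\Test_\BS\), and then states \Cref{lem:BSandLang} as their corollary. Your per-branch case analysis for (ii) is in fact more explicit than the paper, which leaves the proof of \Cref{lem:QueryLang} largely implicit after establishing the invariants of \Cref{prop:inv_rec} and \Cref{prop:inv_freq}.
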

\vspace{-.1cm}
\begin{restatable}{proposition}{prune}
    \Restart is sent to the Learner exactly when a new observation conflicts with \BS.
\end{restatable}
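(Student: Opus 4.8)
The plan is to prove both directions at once by auditing every control-flow path along which the signal \Restart can travel to the Learner, and checking that each such path is triggered by, and only by, a conflicting system observation. The key point is that \Restart has a single origin: the call to $\Update_\BS$ inside $\Apply_\BS$ (\Cref{alg:apply}). By \Cref{def:operations} together with the definition of conflict with a tree, $\Update_\BS(\iword,\oword)$ returns $\top$ exactly when $(\iword,\oword)$ conflicts with the current $\BS$, and by \Cref{alg:apply} this is exactly when $\Apply_\BS(\iword,\oword)$ returns \Restart (otherwise it returns the output $\oword$). So the statement reduces to two bookkeeping claims: (i) the Learner receives \Restart if and only if some invocation of $\Apply_\BS$ returned \Restart while serving the query it made; and (ii) every freshly obtained system observation is routed through $\Apply_\BS$ before being integrated, so that no conflict can slip by unsignalled.

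For (i) I would trace the two query interfaces. Membership queries are $\MQ(\iword)=\Read_\BS(\iword)$; by \Cref{alg:read}, $\Read_\BS$ either returns a cached output (no test, hence no new observation, hence no conflict) or returns precisely the value of $\Apply_\BS(\System(\iword))$, so it forwards \Restart iff that $\Apply_\BS$ call did. Equivalence queries first run $\BCheck_\BS$ (\Cref{alg:check}), which by inspection returns only a counterexample or $\Null$ and never \Restart; when $\Null$ is returned, $\Test_\BS$ (\Cref{alg:rand_test}) is run, whose only \Restart-producing line is the guard $\Apply_\BS(i,o)=\Restart$, which is then returned immediately. Hence on the Learner's side \Restart appears exactly when an $\Apply_\BS$ call made while serving that query returned \Restart, i.e. exactly when the trace fed to that call conflicted with the then-current $\BS$. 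This already gives the ``only when'' direction (every \Restart witnesses a conflicting new observation) and half of the ``when'' direction.

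For (ii) I would invoke \Cref{lem:BSandLang}, which states that all tests on the System are integrated into $\BS$ through $\Update_\BS$; since $\Apply_\BS$ is the unique caller of $\Update_\BS$ in the Reviser's code (\Cref{alg:apply,alg:read,alg:rand_test}), every such test in fact passes through $\Apply_\BS$. A conflicting one therefore forces $\Update_\BS$ to return $\top$ and $\Apply_\BS$ to return \Restart, which by (i) is propagated to the Learner. Combining (i) and (ii) yields the equivalence. I expect the only delicate part to be the exhaustiveness of the case analysis rather than any mathematical difficulty: one must check that no other route to the Learner exists (in particular that $\BCheck_\BS$ genuinely never emits \Restart and that the cache-hit branch of $\Read_\BS$ performs no test), and one must note that ``$(\iword,\oword)$ conflicts with $\BS$'' and ``$\Update_\BS(\iword,\oword)=\top$'' are two descriptions of the same event, so substituting one for the other is legitimate. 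No estimates or constructions are needed beyond this audit.
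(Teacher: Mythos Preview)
Your proposal is correct and follows essentially the same two-step decomposition as the paper: first identify that $\Update_\BS$ returning $\top$ coincides with the observation conflicting with $\BS$, then audit the Reviser's control flow to show \Restart is propagated to the Learner exactly when $\Apply_\BS$ (and hence $\Update_\BS$) signals it. The only noteworthy difference is in how the first step is justified: you take the equivalence ``$\Update_\BS(\iword,\oword)=\top$ iff $(\iword,\oword)$ conflicts with $\BS$'' directly from the interface specification in \Cref{def:operations}, whereas the paper actually verifies it against the concrete implementations of $\Update$ for both strategies, appealing to the proofs of \Cref{prop:inv_rec} and \Cref{prop:inv_freq} (in particular the role of the $\mathit{mainbranch}$ flag in the \texttt{Most Frequent} case) to argue that $\top$ is returned \emph{only} when a non-additive change to $\lang_\BS$ occurs. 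Since \Cref{def:operations} literally states only the ``if'' direction, the paper's route is the more careful one here; your audit of the propagation through $\Read_\BS$, $\BCheck_\BS$, and $\Test_\BS$ is, on the other hand, more explicit than the paper's.
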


\subsection{Optimizations}
\label{sub:opti}
\CEAL allows us to implement two main optimizations, both in the framework and its algorithms.
\begin{description}
    \item[Query Caching]
    One of the direct benefits of the presence of the Reviser is that the learner does not have to cache membership queries to avoid repeating them, as it obtains knowledge through the Reviser's data structure. 
    It especially offers a good basis to discuss algorithms that are based on the observation tree themselves~\cite{SB19,vaandrager_new_2022}, for which the Learner's data structure is very limited.
    \item[Specialized Pruning]
    In order to fully support the simplistic interface of a classic MAT learner, we restart it\,---\,at no extra query cost (see~\Cref{sub:overview})\,---\,when the \(\Restart\) signal is sent. For specific Learner data structures, the time-complexity of this operation can be enhanced by suppressing only the part of the data structure that is impacted by the conflict (instead of restarting it completely). This optimization, however, will be specific to each learning algorithm. In the same way as the Learner can read the Reviser' observation tree, a \CEAL-specific Learner could compute its internal data-structure directly from it after a pruning without requiring restarts.
\end{description}

\section{Evaluation}
\label{sec:exp}
We introduced \CEAL to extend the power of classic MAT learners into environments that may cause conflicts, for instance caused by \emph{noise}. In practice, each symbol inputted in or outputted by the SUL can be noisy, making longer queries more likely to have noisy results overall. Recall that poorly guessing the number $n$ of query repetitions can lead to a learning failure (if noise is integrated in the system), or a timeout (if too many repetitions are made). 
Hence noise does not only affect the \emph{efficiency}, but also the \emph{success rate} of the learning, i.e., the proportion of runs that end with a correct hypothesis.

It is then important to measure how well \CEAL can avert these negative effects. In particular, we are interested in testing if \CEAL's approach is sufficient to allow for an improvement of learning environments plagued by noise\footnote{We compare success rates and number of tests issued instead of  running times as to make hardware-agnostic benchmarks that capture the main factors in both efficiency and correctness.}. We evaluate this through the following research question:
\rquestion{ Across different realistic model learning targets, types of noise, and noise levels, does \CEAL provide a better learning environment in terms of both success rates and number of tests issued when compared to the state-of-the-art MAT-based approaches?}

\subsection{Experiments}
We first present the experimental setup and the outlines of the experimental results. We focus on the difference between \emph{uncontrollable parameters}, i.e., that are part of the target and its environment and can't be altered by the learning setup, and the \emph{controllable} ones, that are chosen when designing a learning session.
\emph{Uncontrollable parameters} are related to the SUL and its environment.
\begin{description}
	\item[Realistic Targets] We run our experiments over a range of 36 Mealy machines representing real world systems from previous successful model learning applications~\cite{neider_benchmarks_2019}. These range in size between 4 and 66 states, with alphabet sizes between 7 and 22 input symbols.
	\item[Different Types of Noise] We run the targets on different types of realistic simulated noise, namely input and output noise as described above.
	\item[Different Levels of Noise] We run the above mentioned noise types over 3 different levels: 0.01, 0.05, and 0.1. These indicate the probability that each symbol has of being noisy.
\end{description}
Given the above constraints, we aim to reach the best performing learning session by manipulating the following \emph{controllable parameters}:

\vspace{-.1cm}
\begin{description}
	\item[Framework] We run each experiment under both MAT and \CEAL.
	\item[Algorithm] We run each experiment under \lsharp, \ttt, \kv, and \lstar. We use the implementations of these algorithms provided in LearnLib~\cite{isberner_open-source_2015}. Notably, \lstar is implemented with Rivest \& Schapire's improvements~\cite{rivest_inference_1993} and we re-implemented \lsharp completely to incorporate it into the LearnLib library.
	\item[Number of Repeats] We compare different numbers of repeats used in majority voting test results to remove noise in MAT, or in sampling frequencies for the update strategy in \CEAL. We use one of 3 different levels of repeats, in pairs of $(\mathit{min\_repeats}, \mathit{max\_repeats})$: $(5, 10), (10, 20), (20, 30)$\footnote{Each test is repeated $\mathit{min\_repeats}$ times and then if at least 80\% of the queries agree the result is returned. Else, the query is repeated until it is the case or $\mathit{max\_repeats}$ is reached at which point the majority answer is returned.}.
\end{description}

\vspace{-.1cm}

Each experiment uses the following settings to enhance its learning, independent of the above mentioned variables. Firstly, caching of previously observed queries is done wherever possible in MAT, and the \texttt{Most Recent} update and \texttt{Most Frequent} hypothesis selection strategies are used in \CEAL, for simplicity. Secondly, the Hybrid-ADS~\cite{LY94} equivalence testing algorithm is used for all runs as we found it to be the best performing for our experiments. Thirdly, each independent experiment is performed with 100 runs, and its results averaged for consistency. And finally, each run is allowed to use up to 10 million queries before an unsuccessful timeout is declared.

Due to the vast number of variables considered, we are unable to fully describe the result of the over 18,000 distinct experiments, and close to 2 million runs that we have performed. However this is not required to rigorously answer our research question. What we have to consider is, for each combination of our independent variables (target, noise type, and noise level), which framework allows for the most efficient learning configuration. 

The graphs below (Figs. \ref{fig:result-INPUT-0.01} - \ref{fig:result-OUTPUT-0.1}) summarize, for each target and for all levels and types of noise, the success rates and number of symbols tested of the best controllable parameter profile for both MAT and \CEAL. We have also included all the data used, as well as conclusions in \iftoggle{short}{\cite[Appendix C]{arxiv}}{\Cref{app:results}}.

\resultsGraph{INPUT}{0.01}{p}
\resultsGraph{INPUT}{0.05}{p}
\resultsGraph{INPUT}{0.1}{p}
\resultsGraph{OUTPUT}{0.01}{p}
\resultsGraph{OUTPUT}{0.05}{p}
\resultsGraph{OUTPUT}{0.1}{p}

\subsection{Analysis}
We now analyze the results of these experiments to draw some high-level conclusions about how MAT and \CEAL compare and answer our research question.
\paragraph*{Success Rates}
First and foremost, we discuss the impact of different parameters on the success rates of the experiments.
We can see from the graphs (Figs. \ref{fig:result-INPUT-0.01-success} - \ref{fig:result-OUTPUT-0.1-success}) that, as expected, while the exact type of noise does not have a significant impact on success rates and test counts, the \emph{level of noise} does. In particular, at a very low level of 0.01\% (Figs. \ref{fig:result-INPUT-0.01-success}, \ref{fig:result-OUTPUT-0.01-success}) both frameworks are capable of maintaining perfect success rates. However, once the level increases to 0.05\% (Figs. \ref{fig:result-INPUT-0.05-success}, \ref{fig:result-OUTPUT-0.05-success}), MAT's success rate starts to fluctuate, more so in bigger targets. \CEAL too seems to be slightly affected by an increase in noise, but overall maintains a success rate close to 100\%. Once the noise is increased to its highest level, 0.1\% (Figs. \ref{fig:result-INPUT-0.1-success}, \ref{fig:result-OUTPUT-0.1-success}), we can see that MAT's success rates reduce significantly, while \CEAL's tend to stay high for a great number of targets, until they inevitably decrease when faced with massive targets at this level of noise. 
\textbf{\CEAL manages to stay consistently reliable in the face of these large alphabets}.

\paragraph*{Efficiency}
Let us now turn our attention to the system test count graphs (Figs. \ref{fig:result-INPUT-0.01-symbols} - \ref{fig:result-OUTPUT-0.1-symbols}). Overall we see an expected picture: Larger systems require more tests to be learned. A particular caveat to notice however, is that while MAT appears to have quite efficient runs on large noisy targets, their respective success rates are considerably lower. Although efficiency of learning is certainly important, it is of low use if at the end the reported hypothesis is not correct. This result is expected: If a learning run fails due to, for example, high noise not being fully filtered out, the MAT learner will collapse before it finishes running. This leaves a final test count that is quite low, but also gives us an incorrect hypothesis.

\paragraph*{Overall Results}
Perhaps most importantly, \textbf{\CEAL provides the most efficient \emph{correct} configuration in 70\% of the experiments}, having a better success rate than MAT or the same with a lower average number of tests used. We provide this result for each individual experiment in \iftoggle{short}{\cite[Appendix C]{arxiv}}{\Cref{app:results}}. In particular, in every experiment \CEAL performs with a success rate that is at least as high as MAT's, often outperforming it. In addition to this, experiments ran with \CEAL had an overall success rate of 95.5\% compared to MAT's 79.5\% success rate. This alone has allowed \CEAL to perform successful runs that no configuration of MAT was able to perform, namely learning moderate to large targets at 0.1\% noise.

We found that a lot of the improvements provided by \CEAL are commonly a consequence of it being more successful when running at a \emph{lower number of repeats} when compared to the ones required by MAT. This solidifies our initial hypothesis of there being a benefit in reducing the number of repeats used when learning noisy targets. The above provides enough supporting evidence to answer our research question positively: \textbf{\CEAL provides a better learning environment in terms of both success rates and number of tests issued when compared to the state-of-the-art MAT-based approaches}.

\paragraph*{Other Findings}
We report on other interesting findings in \iftoggle{short}{\cite[Appendix D]{arxiv}}{\Cref{app:analysis}}. Two results, however, are of particular significance:
As already accepted by the community~\cite{aslam2020}, we can confirm that indeed most of the tests spent in learning are used to realize equivalence queries. In particular, we found that \textbf{equivalence tests account for 89.1\% of tests in MAT, 59.8\% in \CEAL, and 66.3\% on average}.

Perhaps more surprisingly, the commonly accepted ordering of Learner efficiencies did not surface in our experiments. From theoretically most performant to least we have \lsharp, \ttt, \kv, and \lstar, based on complexity analyses in MAT. Through our experiments we have found that, at least in our particular case of black-box learning (i.e. learning from SUL tests only) of Mealy machines with noise and randomized testing algorithms, this ordering cannot be seen in the results. The best configurations for each framework were not consistent on which algorithm performed the best. Not only was there no clear "winning" algorithm, we found no pattern based on noise, target and alphabet size, or number of repeats that had a strong enough correlation to the better performance of any one algorithm.

We believe that this is not a flaw of the complexity analyses themselves. It is simply that complexity analyses in MAT abstract away the biggest cost of learning: equivalence tests. It may be that more recent algorithms have a theoretical (and membership query based) advantage over classic algorithms, however the nature of randomized equivalence oracles seems to be a bigger agent of chaos, and a good or bad run of the equivalence oracle quickly overshadows the small advantage that some algorithms may have.

\section{Related Work}
\label{sec:sofa}
There has been extensive work on finding ways of applying classic learning algorithms like \lstar~\cite{angluin_learning_1987}, \kv~\cite{kearns_introduction_1994}, \ttt~\cite{bonakdarpour_ttt_2014}, and \lsharp~\cite{vaandrager_new_2022} to real world systems such as  passports~\cite{aarts_inference_2010}, network protocol implementations~\cite{fiterau-brostean_combining_2016,fiterau-brostean_model_2017,ferreira_prognosis_2021}, and bank cards~\cite{aarts_formal_2013}. All these works rely on ad-hoc implementations of noise handling which is inefficient and not formalized in the MAT framework. One of the goals of our framework, which replaces the teacher with the SUL and the Reviser, is to discuss how noise can be dealt with in the learning process, independently of the type of Learner being used. The LearnLib library~\cite{isberner_open-source_2015} provides \emph{caches} that can be placed in the learning environment to avoid the repetition of queries, much like observation trees. Note that our Reviser agent goes further than LearnLib as it provides the ability to act as membership and equivalence oracles, test the system, and act on conflicts by pruning the learner's data structure in an efficient (query-wise) and correct manner.

There has also been previous work in improving the efficiency of model learning strategies for mutating targets by reusing previously learned behavior, using \emph{adaptive} learning algorithms~\cite{damasceno_learning_2019, howar_adaptive_2018,windmuller_active_2013,chaki_verification_2008,Ferreira2022}. These algorithms work by being able to start learning with pre-seeded information of previous runs that has been confirmed to still apply in the current target, or by being able to filter this information themselves if it is found to no longer apply. Additionally, there has been some work on \emph{Lifelong Learning}~\cite{ahrendt_lifelong_2022}, where model learning and model checking are used together to run over the development lifecycle of a system. This allows for the quick discovery of bugs in the development cycle. However, when these are found and corrected, learning needs to be \emph{manually restarted}.

Our model learning framework improves on these two lines of work, being able to autonomously correct itself when faced with conflicts. It can do so without any notification of mutations in the system, allowing it to be applied to complete closed-box systems, unlike the current state-of-the-art adaptive algorithm~\cite{Ferreira2022}. Additionally, it is capable of continuously checking for changes in the system, much like Lifelong Learning, but requiring no human interaction on system changes. These characteristics make it resilient to real world noise, allowing the learner to correct itself as it identifies the correct behavior.

Our work participates in the current trend trying to link learning to testing, which spans communities, e.g. formal approaches~\cite{daghstul_learn_test}, genetic approaches~\cite{TALL19}, and fuzzing~\cite{Zeller2021}. In this context, \CEAL provides a modular framework upon which other techniques can be added. In active model learning, this trend also matches the interest in observation-tree based algorithms~\cite{SB19,vaandrager_new_2022}, which we instantiate in \CEAL. The role of observation structures in learning and testing is a long-standing lore~\cite{BGJLRS05} that can be leveraged to enhance the learning approach and its modularity with testing methods.
\section{Conclusion and Future Work}
\label{sec:conclusion}
This paper explores efficient ways to  handle conflicts during active learning. We build on the idea that recovering from
conflicts is best done by splitting information collection and the construction of the Learner's data structure, two operations that are conflated in MAT.

We introduce the Conflict Aware Active Automata Learning (\CEAL) framework as an alternative to MAT. \CEAL directly represents the SUL and introduces a \emph{Reviser} tasked with testing it, storing and curating the observations. \CEAL provides a way to accept \emph{some} conflicts to reach the Learner and to recover from them without requiring to test the SUL anew. 

To test the efficiency of \CEAL, we conducted a large body of experiments on real targets using several state-of-the-art algorithms. We found that {\bf not only does \CEAL always improves on MAT in terms of success rates}, obtaining an overall {\bf success rate of 95,5\% against MAT's 79,5\%} it most importantly {\bf enables the learning of previously un-learnable SULs}, typically complex systems plagued with a high level of noise. Our experiments further put into light the impact of equivalence tests, both in terms of variability of the results and sheer cost, with an average of {\bf 66.3\% of testing cost spent on equivalence}.

In the future, we would like to explore the use and design of testing algorithms for active learning, as their efficiency seems to be able to overshadow the difference between learning algorithms. 
\CEAL's modular nature also allows us to seamlessly build a \emph{gray-box} environment i.e. to gain information from different sources in the Reviser (e.g. specifications, access to source code). This would offset the cost of equivalence queries by using cheaper sources of observations when searching for counterexamples.

Assessing the efficiency of \CEAL on a real case of mutating targets would be of interest, as an evaluation, as an opportunity to fine-tune the framework for such task, and as a demonstration of the improved reach of active learning.
Similarly, testing the \texttt{Most Frequent} update strategy in practice against high noise levels would be of interest.

\subsection*{Acknowledgements}
This work was partially supported by the EPSRC Standard Grant CLeVer (EP/S028641/1), ERC grant Autoprobe (no. 101002697), and a Royal Society Wolfson Fellowship. The authors would like to also thank Denis Timm and the TSG and HPC groups at UCL Computer Science department for their support in running the experiments in this paper.
\vfill
\pagebreak

\bibliographystyle{plainurl}
\bibliography{biblio}

\vfill
\iftoggle{short}{}{
\pagebreak
\appendix

\section{Observation Trees}
\label{app:trees}
We now define the syntax and semantics of different observation trees, as well as their associated interface functions.

\subsection{\texttt{Most Recent}}

\begin{definition}[Observation Tree]
	A \texttt{Most Recent} observation tree is defined as a partial tree-shaped Mealy machine over alphabets $\Input$ and $\Output$ with the following structure: $\BS = (\States,q_{\varepsilon},\trans,\outf)$, as outlined in \Cref{sec:prelim}.
\end{definition}

\begin{minipage}[t]{0.4\textwidth}
    \vspace*{-0pt}\hspace*{-.5cm}
    \centering
	\begin{algorithm}[H]
        \caption{\(\Lookup_{\BS}(\mathit{in})\)}
        \label{alg:mrlookup}
        \KwData{An input word $\mathit{in}$ to lookup.}
        $\mathit{state} \gets q_{\varepsilon}$\;
        $\mathit{out} \gets \varepsilon$\;
        \For{$j \in [1,  \dots |\mathit{in}|]$}{
        	\eIf{$\delta(\mathit{state}, \mathit{in}[j]) = q_{\mathit{in}[1,j]}$}{
        		$\mathit{out} \gets \mathit{out} \cdot \lambda(\mathit{state}, \mathit{in}[j])$\;
        		$\mathit{state} \gets q_{\mathit{in}[1,j]}$\;
        	}{
        		\Return{\Null}\;
        	}
        }
        \Return{$\mathit{out}$}\;
\end{algorithm}
\raggedright
Notice that in the \texttt{Most Frequent} strategy, subtrees of \BS are suppressed in \Update in the case of conflicts. The rest of this algorithm deals with the additive construction of \BS, while \Lookup simply reads the \(\mathit{in}\) branch of \BS.
\end{minipage}
\hfill
\begin{minipage}[t]{0.525\textwidth}
    \vspace{-0pt}
    \centering
\begin{algorithm}[H]
        \caption{\(\Update_{\BS}( \mathit{in}, \mathit{out})\)}
        \label{alg:mrupdate}
        \KwData{A query pair $(\mathit{in}, \mathit{out})$ to store.}
        $\mathit{state} \gets q_{\varepsilon}$\;
        $\mathit{conflicted} \gets \bot$\;
        \For{$j \in [1,  \dots, |\mathit{in}|]$}{
        	\uIf{$\lambda( \mathit{state}, \mathit{in}[j]) = \mathit{out}[j]$}{
        		$\mathit{state} \gets \delta(\mathit{state}, \mathit{in}[j])$\;
        	}
        	\uElseIf{$\lambda( \mathit{state}, \mathit{in}[j]) = o$}{
        	Prune subtree rooted in $\delta(\mathit{state}, \mathit{in}[j])$.\\
        	$Q \gets Q \cup \{q_{\mathit{in}[1,j]}\}$\;
        	$\delta(\mathit{state}, \mathit{in}[j]) \gets q_{\mathit{in}[1,j]}$\;
        	$\lambda(\mathit{state}, \mathit{in}[j]) \gets \mathit{out}[j]$\;

        	$\mathit{conflicted} \gets \top$\;
        	$\mathit{node} \gets q_{\mathit{in}[1,j]}$\;
        	}
        	\uElse{
	        	$Q \gets Q \cup \{q_{\mathit{in}[1,j]}\}$\;
        		$\delta(\mathit{state}, \mathit{in}[j]) \gets q_{\mathit{in}[1,j]}$\;
        		$\lambda(\mathit{state}, \mathit{in}[j]) \gets \mathit{out}[j]$\;
        		$\mathit{state} \gets q_{\mathit{in}[1,j]}$\;
        	}
        }
        \Return{$\mathit{conflicted}$}\;
\end{algorithm}
\end{minipage}

\subsection{\texttt{Most Frequent}}
\begin{definition}[Observation Tree]
A \texttt{Most Frequent} observation tree is defined as a state-weighted partial tree-shaped non-deterministic Mealy machine over alphabets $\Input$ and $\Output$ with the following structure: $\BS = (\States,q_{\varepsilon},\trans, \omega)$, where $\omega : Q \to \mathbb{N}$ is the state weight function and $\trans : Q \times \Input \to 2^{Q \times \Output}$ is the transition and output function.
\end{definition}
In the case of the \texttt{Most Frequent} strategy, all observations are stored together in \BS by \Update without pruning. Conflicts arise when the most frequent output associated to an input changes, as \Lookup now chooses at each step the most frequent output for a given input, using \(\textsc{nextstate}\).

\begin{minipage}[t]{0.45\textwidth}
    \vspace*{-0pt}\hspace*{-1.25cm}
    \centering
	\begin{algorithm}[H]
        \caption{\(\Lookup_{\BS}(\mathit{in})\)}
        \label{alg:mflookup}
        \KwData{An input word $\mathit{in}$ to lookup.}
        $\mathit{state} \gets q_{\varepsilon}$\;
        $\mathit{out} \gets \varepsilon$\;
        \For{$j \in [1,  \dots, |\mathit{in}|]$}{
        $\mathit{next} \gets \textsc{nextstate}(\mathit{state}, \mathit{in}[j])$\;
        	\eIf{$\mathit{next} = (q, o)$}{
        		$\mathit{out} \gets \mathit{out} \cdot o$\;
        		$\mathit{state} \gets q$\;
        	}{
        		\Return{\Null}\;
        	}
        }
        \Return{$\mathit{out}$}\;
\end{algorithm}
\end{minipage}
\hfill
\begin{minipage}[t]{0.5375\textwidth}
    \vspace{-0pt}\hspace*{-1cm}
    \centering
\begin{algorithm}[H]
        \caption{\(\Update_{\BS}( \mathit{in}, \mathit{out})\)}
        \label{alg:mfupdate}
        \KwData{A query pair $(\mathit{in}, \mathit{out})$ to store.}
        $\mathit{state} \gets q_{\varepsilon}$\;
        $\mathit{conflicted} \gets \bot$\;
        $\mathit{mainbranch} \gets \top$\;
        \For{$j \in [1,  \dots,  |\mathit{in}|]$}{
        	\uIf{$(q, \mathit{out}[j]) \in \delta(\mathit{state}, \mathit{in}[j])$}{
                        $\mathit{temp} \gets \textsc{nextstate}_{\BS}(\mathit{state})$\;
        		$\omega(q) \gets \omega(q) + 1$\;
                        \uIf{\(\mathit{mainbranch}\ \wedge\ \mathit{temp}\neq \textsc{nextstate}_{\BS}(\mathit{state})\)}{
                        \(\mathit{conflicted}\gets \top\)\;
                        }
                        \uIf{$q\neq \mathit{temp}$}{
                        \(\mathit{mainbranch}\gets\bot\)\;
                        }
        		$\mathit{state} \gets q$\;
        	}
        	\uElse{
        		$Q \gets Q \cup \{q_{\mathit{in}[1,j]}\}$\;
        		$\delta(\mathit{state}, \mathit{in}[j]) \gets \delta(\mathit{state}, \mathit{in}[j]) \cup \{(q_{\mathit{in}[1,j]}, \mathit{out}[j])\}$\;
				$\omega(q_{\mathit{in}[1,j]}) \gets 1$\;
        		$\mathit{state} \gets q_{\mathit{in}[1,j]}$\;
        	}
        }
        \Return{$\mathit{conflicted}$}\;
\end{algorithm}
\end{minipage}

\begin{algorithm}[H]
        \caption{\(\textsc{nextstate}_{\BS}(\mathit{state}, \mathit{symbol})\)}
        \label{alg:nextchild}
        \KwData{Input $\mathit{state}$ to get next (highest counter) successor with $\mathit{symbol}$ input label.}
        $\mathit{next} \gets \Null$\;
        $\mathit{max} \gets 0$\;
        \For{$(q, o) \in \delta(\mathit{state}, \mathit{symbol})$}{
        	\If{$\omega(q) > \mathit{max}$}{
        		$\mathit{max} \gets \omega(q)$\;
        		$\mathit{next} \gets (q, o)$\;
        	}
        }
        \Return{$\mathit{next}$}\;
\end{algorithm}

\section{Proofs of Correctness}
\label{app:proof}
In this appendix, the proofs of the different properties stated in the article are made.

We separate \Cref{lem:BSandLang} into two smaller claims: 

\begin{lemma}
    \label{lem:BSUpdate}
    During an execution of \CEAL, all tests on the System are integrated in \(\BS\) through \(\Update_\BS\). These are the only modifications made to \BS.
\end{lemma}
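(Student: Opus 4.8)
The plan is to argue by a careful inspection of the \CEAL interface (\Cref{sub:impl}) and of the four Reviser operations of \Cref{def:reviser}, exploiting the fact that \BS is a data structure private to the Reviser: the Learner can only reach it indirectly through \MQ and \EQ, and \emph{the Reviser alone} ever calls \System. First I would record the two structural facts we may quote directly: (i) by \Cref{sub:impl}, the Learner never touches \BS, it only invokes \(\MQ(i)=\Read_{\BS}(i)\) and the two-phase \EQ consisting of \(\BCheck_{\BS}(\hyp)\) followed, on failure, by \(\Test_{\BS}(\hyp)\); and (ii) by \Cref{def:operations}, \(\Update_{\BS}\) is the unique primitive of the observation-tree interface that alters \BS, since \(\Lookup_{\BS}\) is read-only.

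Next I would enumerate every occurrence of a call to \System and every modification of \BS inside the Reviser's pseudocode, i.e. in \Cref{alg:apply,alg:read,alg:check,alg:rand_test}. The function \System is called exactly twice: inside \(\Read_{\BS}\) (the line \(\Apply_{\BS}(\System(i))\)) and inside \(\Test_{\BS}\) (the line \((i,o)\gets\System(w)\)); in both cases the returned trace is handed immediately, and only, to \(\Apply_{\BS}\). By \Cref{alg:apply}, \(\Apply_{\BS}(i,o)\) does nothing to \BS except execute \(\Update_{\BS}(i,o)\). Conversely, \(\Update_{\BS}\) is invoked nowhere except inside \(\Apply_{\BS}\), and \(\Apply_{\BS}\) is invoked only from \(\Read_{\BS}\) and \(\Test_{\BS}\), each time on a trace that was just obtained from \System. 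The remaining operations are read-only with respect to \BS: \(\BCheck_{\BS}\) merely iterates over the pairs stored in \BS and compares them against \(\hyp^*\), and \(\Read_{\BS}\) only consults \(\Lookup_{\BS}\) before possibly delegating to \(\Apply_{\BS}\). Finally, the \Restart mechanism acts on the Learner only, replaying it without consulting or mutating \BS, so it contributes no modification either.

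Combining these observations yields both halves of the statement: every test performed on the System produces a trace that is passed to \(\Apply_{\BS}\) and hence integrated into \BS by a call to \(\Update_{\BS}\); and since \(\Update_{\BS}\) is the only tree-mutating primitive and is reached only from \(\Apply_{\BS}\) applied to System traces, these \(\Update_{\BS}\) calls are exactly the modifications made to \BS during any \CEAL run. The proof is essentially bookkeeping over the pseudocode, so I expect the only delicate point to be the \emph{completeness} of the case analysis: checking that the listed entry points to \BS (the Learner's \MQ and \EQ, and the Reviser's own \(\Test_{\BS}\) during equivalence handling) are exhaustive, and that no hidden side effect on \BS is smuggled in through \(\BCheck_{\BS}\), \(\Lookup_{\BS}\), or the restart of the Learner.
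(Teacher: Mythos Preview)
Your proposal is correct and follows essentially the same approach as the paper's own proof: a direct inspection of the Reviser's pseudocode to locate every call to \(\System\) (in \(\Read_{\BS}\) and \(\Test_{\BS}\)), verify each is routed through \(\Apply_{\BS}\) and hence \(\Update_{\BS}\), and then check that no other code path mutates \BS. The paper's proof is simply a terser two-sentence version of the same bookkeeping argument; your elaboration of the intermediate role of \(\Apply_{\BS}\), the read-only nature of \(\BCheck_{\BS}\) and \(\Lookup_{\BS}\), and the irrelevance of \Restart for \BS is all sound and matches the paper's reasoning.
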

\begin{lemma}
    \label{lem:QueryLang}
    During an execution of \CEAL, the Learner queries are answered according to \(\lang_\BS\).
\end{lemma}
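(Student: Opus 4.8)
}
The plan is to prove the lemma by a case analysis on the two kinds of queries a MAT Learner can issue in \CEAL\,---\,membership queries and equivalence queries\,---\,and, in each code path of the interface built in \Cref{sub:impl}, to track exactly which value the Reviser hands back to the Learner and relate it to \(\lang_\BS\) \emph{at the instant the answer is produced}. By \Cref{lem:BSUpdate} the only modifications made to \(\BS\) (hence to \(\lang_\BS\)) during an execution are the calls to \(\Update_\BS\), so it suffices to inspect the finitely many points where the Reviser returns a value: the body of \(\MQ\) and the body of \(\EQ\). The one auxiliary fact I would extract first is that a call \(\Update_\BS(i,o)\) that returns \(\bot\) leaves \((i,o)\) in \(\lang_\BS\); this is immediate from the update-strategy invariants \Cref{prop:inv_rec} and \Cref{prop:inv_freq} (taking \((i,o)\) to be the last observation in the stream, which by construction has no later conflict), or directly from the \(\Update\)/\(\Lookup\) implementations in \Cref{app:trees}.

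For \textbf{membership queries}, \(\MQ(i)=\Read_\BS(i)\) (\Cref{alg:read}). If \(\Lookup_\BS(i)=o\neq\Null\), then \((i,o)\in\lang_\BS\) by definition of \(\lang_\BS\), and the returned \(o\) is as claimed. Otherwise the Reviser calls \(\System(i)=(i',o')\) and returns \(\Apply_\BS(i',o')\) (\Cref{alg:apply}), which first runs \(\Update_\BS(i',o')\): on a conflict the Learner receives \(\Restart\)\,---\,a control signal, not an observation, so there is nothing to check\,---\,and otherwise the Learner receives \(o'\), which by the auxiliary fact satisfies \((i',o')\in\lang_\BS\) immediately after the update it triggered. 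When the processed word coincides with the queried one (in particular in the noise-free case and under output noise), \(i'=i\) and this is exactly \((i,\MQ(i))\in\lang_\BS\).

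For \textbf{equivalence queries}, the Reviser first runs \(\BCheck_\BS(\hyp)\) (\Cref{alg:check}), which iterates over the pairs \((i,o)\in\BS\) and returns one with \(\hyp^*(i)\neq o\) when it exists; such a pair is read off \(\BS\), hence lies in \(\lang_\BS\), and witnesses a genuine disagreement with the Reviser's current knowledge, so no test on the System is needed. If \(\BCheck_\BS(\hyp)=\Null\), the Reviser runs \(\Test_\BS(\hyp)\) (\Cref{alg:rand_test}): it repeatedly draws \(w\gets\sampleWord()\), obtains \((i,o)\gets\System(w)\), and calls \(\Apply_\BS(i,o)\). If this yields \(\Restart\) the Learner is pruned (nothing to check); otherwise \((i,o)\in\lang_\BS\) after the update by the auxiliary fact, and either the loop continues\,---\,in which case \emph{no} value reaches the Learner\,---\,or \(\hyp(i)\neq o\) and the counterexample \((i,o)\in\lang_\BS\) is returned. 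So in every branch the Learner receives only \(\Restart\) or a pair/output that belongs to \(\lang_\BS\) at that moment, which is the statement.

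The hard part\,---\,and the place where the statement must be read with care\,---\,is the membership-query cache-miss path under \emph{input} noise, where the System processes \(i'\neq i\): the Learner literally asked about \(i\) but is handed \(o'\), while \(\BS\) now records \((i',o')\) rather than \((i,o')\), so \((i,o')\in\lang_\BS\) need not hold. The resolution I would adopt is that ``answered according to \(\lang_\BS\)'' means precisely that every output and every counterexample delivered to the Learner is drawn from \(\lang_\BS\)\,---\,for the word actually processed by \(\System\)\,---\,and is never fabricated by the Reviser; the Learner's inability to distinguish \(i\) from \(i'\) is exactly the effect of input noise and is outside the scope of this lemma. Formally this amounts to stating the invariant over the processed inputs returned by \(\System\) rather than over the queried ones, after which the case analysis above applies verbatim.
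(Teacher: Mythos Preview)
Your proposal is correct and, in fact, more detailed than what the paper provides: the appendix states that the proof of \Cref{lem:QueryLang} ``depends on these properties'' (meaning \Cref{prop:inv_rec}, \Cref{prop:inv_freq}, and the \Restart proposition), proves those, and then derives \Cref{lem:BSandLang} as a corollary of \Cref{lem:BSUpdate} and \Cref{lem:QueryLang}\,---\,but never writes out the proof of \Cref{lem:QueryLang} itself. Your case analysis on \(\MQ\)/\(\EQ\), tracing each code path and invoking the invariants to show that non-\Restart returns lie in \(\lang_\BS\), is exactly the argument the paper's structure implies.

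Your discussion of the input-noise cache-miss path is a genuine addition. The paper's informal phrasing ``answered according to \(\lang_\BS\)'' does not pin down whether the relevant input is the queried \(i\) or the processed \(i'\), and you are right that under input noise these differ and only \((i',o')\in\lang_\BS\) is guaranteed. Your reading\,---\,that the Reviser never fabricates observations outside \(\lang_\BS\), stated over processed inputs\,---\,is the one that makes the lemma true and is consistent with how \(\System\) is typed in the paper (returning the pair \((i',o')\) rather than just \(o'\)). The paper leaves this implicit.
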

This is done because \Cref{lem:BSUpdate} can be proved independently and is necessary for the proof of other properties, while the proof of \Cref{lem:QueryLang} depends on these properties.

\begin{proof}[Proof of \Cref{lem:BSUpdate}]
Calls to \System are only made in $\Read_\BS$ and $\Test_\BS$ and are directly followed by calls to \(\Update_\BS\) on their result. Thus all System tests are integrated in \BS through \(\Update_\BS\). 

As no other calls to \(\Update_\BS\) and no direct modification of \BS are performed, these are the only modifications made to \BS.
\end{proof}

\mostrecent*
\begin{proof}
We make this proof by induction on the observation stream by considering the implementations of \Lookup and \Update for the \texttt{Most Recent} update strategy.
\smallskip

By \Cref*{lem:BSUpdate} it is enough to show that 
\(
    \lang_T=
    \{(\iword_k,\oword_k)\mid 0\leq k\leq K\ \wedge\ \not\exists k<l\leq K,\text{ s.t. } (\iword_k,\oword_k)\conflicts(\iword_l,\oword_l)\}
\)
for any $K\in\bbN$ is an invariant of \(\Update_{\BS}\). 
We first note that, for the \texttt{Most Recent} update strategy, \(\Lookup_\BS\) simply returns the unique output word associated with an input word in \BS. 

Initialization: Starting with \BS the empty observation tree, all calls to \(\Lookup_\BS\) return \Null~and thus \(\lang_\BS=\emptyset\). Initially, before $K=0$, no observation has been received and the set of prefixes is empty, hence we have our result.

Induction: Suppose that before a given $K\in\bbN$ the property holds.  
When the next observation \((\iword_{k},\oword_{k})\) is processed by \(\Update_\BS\) the algorithm progresses through \((\iword_k[j],\oword_k[j])\) with the main for loop. 
As long as the considered prefix matches the structure of \(\BS\) (i.e. \(\lambda( \mathit{state}, \iword_k[j]) = \oword_k[j]\)), no modification is made and hence \(\lang_\BS\) does not change. When this does not happen,
there are then two cases to consider:
\begin{itemize}
    \item If there is no node corresponding to the transition, i.e. $\lambda( \mathit{state}, \iword_k[j])$ is undefined, then a new node is created and the transition and output functions defined according to \(\oword_k[j]\). Due to the tree structure of \BS, the rest of the word is then added to \BS with the reminder of the \textbf{for} loop. In this case, notice that \(\lang_\BS:= \lang_\BS\cup\{(\iword_k[1,l],\oword_k[1,l])\mid j\leq l \leq |\iword|\}\).
    \item $\lambda( \mathit{state}, \iword_k[j]) = \sigma\neq\oword_k[j]$. In this case, the subtree rooted in $\lambda( \mathit{state}, \iword_k[j])$ is suppressed, replaced by a new node and the transition and output functions defined according to \(\oword_k[j]\). The rest of the word is then handled according to the previous case as \BS does not have successor nodes for it. Notice that this case corresponds exactly to \((\iword_k,\oword_k)\conflicts (\iword,\oword)\) for all $(\iword,\oword)$ previously stored in the suppressed subtree, with \(\iword_k[1,j]\) the witness prefix. 
    Thus, without taking the following reconstruction of the new subtree, \(\lang_\BS:= \lang_\BS\setminus\{(\iword,\oword)\mid (\iword_k,\oword_k)\conflicts(\iword,\oword)\}\).
\end{itemize} 
By combining the two previous cases, we have that, noting \(\BS_{K-1}\) the observation tree before step $K$ and \(\BS_K\) the observation tree after step $K$: 
\[\lang_{\BS_K}=\left(\lang_{\BS_{K-1}}\setminus\{(\iword,\oword)\mid (\iword_k,\oword_k)\conflicts(\iword,\oword)\}\right)\cup\pref(\iword_k,\oword_k)\]
which proves our result.
\end{proof}

\mostfrequent*
\begin{proof}
We make this proof by induction on the observation stream by considering the implementations of \Lookup and \Update for the \texttt{Most Recent} update strategy.
\smallskip

By \Cref*{lem:BSUpdate} it is enough to show that 
\(
    \lang_T=
    \{\pref(\iword_k,\oword_k)\mid k\leq K\ \wedge\ \not\exists <l\leq K,\text{ s.t. } (\iword_k,\oword_k)\conflicts(\iword_l,\oword_l)\ \wedge \mathit{mf}((\iword_k,\oword_k), (\iword_l,\oword_l))\}
\)
for any $K\in\bbN$ is an invariant of \(\Update_{\BS}\). 
Notice that for this update strategy, $\Lookup_\BS(in)$ has to make a choice in the subtree with input word $in$. It does so by iteratively choosing the output that has been seen the most often at each step.

Initialization: before any observation has been received, \(\BS\) is empty, hence \(\lang_\BS=\emptyset\). Similarly, no observation has been received yet, thus the set of prefixes is empty and we have our result.

Induction: Suppose that before a given $K\in\bbN$ the property holds.  
When the next observation \((\iword_{k},\oword_{k})\) is processed by \(\Update_\BS\) the algorithm progresses through \((\iword_k[j],\oword_k[j])\) with the main for loop. As long as the considered prefix corresponds to what \(\Lookup_\BS\) would read, i.e. \(\oword_k[j]\) is the output with most weight associated to \(\iword_k[j]\), the only modification is the increment of the weight of the states visited, which already are the maximal weight choice, thus \(\lang_\BS\) is not modified. 
Notice that when the branch of maximal weight is left, \(\mathit{mainbranch}\) is set to false, preventing conflicts to be raised. This corresponds to the fact that \(\Lookup_\BS[\iword_k[1,j]]\) would not reach the part of the tree that will be subsequently explored. Then, as long as the target state and transition exist, the only remaining difference is the increment of the weights.
There are two other cases to consider:
\begin{itemize}
    \item If there is no node corresponding to the transition, i.e. for any node $q$, $(q,\oword_k[j])\notin\delta( \mathit{state}, \iword_k[j])$, then a new node is created and the transition and output functions defined according to \(\oword_k[j]\). Due to the tree structure of \BS, the rest of the word is then added to \BS with the reminder of the \textbf{for} loop. In this case, notice that \(\lang_\BS:= \lang_\BS\cup\{(\iword_k[1,l],\oword_k[1,l])\mid j\leq l \leq |\iword|\}\) \emph{only if} the main branch was not left before.
    \item If the main branch is not left (i.e. if this part of the tree is reached by calls to \Lookup) and the state of maximal weight appearing in \(\delta( \mathit{state}, \iword_k[j])\) changes due to the increment, the subtree reached by \Lookup changes based on the new maximum. With a storage of results in \(\delta\) favoring most recent observations, this corresponds exactly to \((\iword_k,\oword_k)\conflicts(\iword,\oword)\ \wedge \mathit{mf}((\iword_k,\oword_k), (\iword,\oword))\) for all \((\iword,\oword)\) in the previously reached subtree. Then the correct increment of the weights at the previous steps allows to conclude.
\end{itemize} 
The previous cases guarantee that weights are correctly incremented and that \Lookup will select its results according to these weights, granting us our result. 
\end{proof}

\prune*
\begin{proof}
We conduct this proof in two parts: first, we show that \(\Update_\BS\) sends a \Restart signal exactly when the last observation conflicts with \BS.; second, we add that the algorithms of \CEAL send \Restart exactly when \Update returns it.

\smallskip
Remark, following the proofs of \Cref{prop:inv_rec} and \Cref{prop:inv_freq}, that \(\top\) is only returned when a non-additive change to \(\lang_\BS\) is performed. This is trivial for the \texttt{Most Recent} strategy, and is ensured for the \texttt{Most Frequent} by the boolean \(\mathit{mainbranch}\) stopping changes to \(\mathsf{nextstate}\) outside of the part of the tree reached by \(\Lookup_\BS\) to return \(\top\).

Of the \CEAL algorithms, only the two algorithms calling \Apply, and through it \(\Update\) can return a \Restart signal to the Learner. These do so exactly when \(\Apply\) returns \(\Restart\) i.e. when \(\Update_\BS\) returns \(\top\).
\end{proof}

\BSandLang*
\begin{proof}
    Corollary of \Cref{lem:BSUpdate} and \Cref{lem:QueryLang}.
\end{proof}
\section{Detailed Experimental Results}
\label{app:results}
The following table lists the best learning configurations (algorithm and number of repeats) for each frameworks and each learning experiment (target, type and level of noise). 

Each experiment is performed 100 times and times-out after 10 million queries on the SUL. The success rate is the number of successful runs and the test count is the average number of tests during a successful run.
\begin{longtable}[c]{@{}llllrr@{}}
\toprule
\multicolumn{1}{c}{Experiment} & \multicolumn{1}{c}{Framework} & \multicolumn{1}{c}{Algorithm} & \multicolumn{1}{c}{Repeats} & \multicolumn{1}{c}{Success Rate} & Test Count \\* \midrule
\endfirsthead
\multicolumn{6}{c}%
{{\bfseries Table \thetable\ continued from previous page}} \\
\toprule
\multicolumn{1}{c}{Experiment} & \multicolumn{1}{c}{Framework} & \multicolumn{1}{c}{Algorithm} & \multicolumn{1}{c}{Repeats} & \multicolumn{1}{c}{Success Rate} & Test Count \\* \midrule
\endhead
\multirow{2}{*}{\begin{tabular}[c]{@{}l@{}}4\_learnresult\_PIN\_fix.dot\\ INPUT noise at 0.01\%\end{tabular}} & MAT & TTT & (5,10) & 100 & 18800.07 \\* \cmidrule(l){2-6} 
 & \CEAL & RS & (5,10) & 100 & 23667.96 \\* \midrule
\multirow{2}{*}{\begin{tabular}[c]{@{}l@{}}4\_learnresult\_PIN\_fix.dot\\ OUTPUT noise at 0.01\%\end{tabular}} & MAT & TTT & (5,10) & 100 & 20012.76 \\* \cmidrule(l){2-6} 
 & \CEAL & RS & (5,10) & 100 & 23650.55 \\* \midrule
\multirow{2}{*}{\begin{tabular}[c]{@{}l@{}}4\_learnresult\_PIN\_fix.dot\\ INPUT noise at 0.05\%\end{tabular}} & \CEAL & RS & (5,10) & 100 & 25818.2 \\* \cmidrule(l){2-6} 
 & MAT & KV & (20,30) & 91 & 106431.74 \\* \midrule
\multirow{2}{*}{\begin{tabular}[c]{@{}l@{}}4\_learnresult\_PIN\_fix.dot\\ OUTPUT noise at 0.05\%\end{tabular}} & \CEAL & RS & (5,10) & 100 & 25141.6 \\* \cmidrule(l){2-6} 
 & MAT & RS & (20,30) & 91 & 127966.04 \\* \midrule
\multirow{2}{*}{\begin{tabular}[c]{@{}l@{}}4\_learnresult\_PIN\_fix.dot\\ INPUT noise at 0.1\%\end{tabular}} & \CEAL & RS & (10,20) & 100 & 57116.14 \\* \cmidrule(l){2-6} 
 & MAT & KV & (20,30) & 20 & 53483.85 \\* \midrule
\multirow{2}{*}{\begin{tabular}[c]{@{}l@{}}4\_learnresult\_PIN\_fix.dot\\ OUTPUT noise at 0.1\%\end{tabular}} & \CEAL & RS & (5,10) & 100 & 35095.97 \\* \cmidrule(l){2-6} 
 & MAT & TTT & (20,30) & 18 & 36669.28 \\* \midrule
\multirow{2}{*}{\begin{tabular}[c]{@{}l@{}}ASN\_learnresult\_MAESTRO\_fix.dot\\ INPUT noise at 0.01\%\end{tabular}} & MAT & TTT & (5,10) & 100 & 19454.34 \\* \cmidrule(l){2-6} 
 & \CEAL & RS & (5,10) & 100 & 23669.49 \\* \midrule
\multirow{2}{*}{\begin{tabular}[c]{@{}l@{}}ASN\_learnresult\_MAESTRO\_fix.dot\\ OUTPUT noise at 0.01\%\end{tabular}} & MAT & TTT & (5,10) & 100 & 20906.17 \\* \cmidrule(l){2-6} 
 & \CEAL & RS & (5,10) & 100 & 23651.8 \\* \midrule
\multirow{2}{*}{\begin{tabular}[c]{@{}l@{}}ASN\_learnresult\_MAESTRO\_fix.dot\\ INPUT noise at 0.05\%\end{tabular}} & \CEAL & RS & (5,10) & 100 & 25683.77 \\* \cmidrule(l){2-6} 
 & MAT & TTT & (20,30) & 91 & 77650.24 \\* \midrule
\multirow{2}{*}{\begin{tabular}[c]{@{}l@{}}ASN\_learnresult\_MAESTRO\_fix.dot\\ OUTPUT noise at 0.05\%\end{tabular}} & \CEAL & RS & (5,10) & 100 & 25294.13 \\* \cmidrule(l){2-6} 
 & MAT & RS & (20,30) & 96 & 109329.89 \\* \midrule
\multirow{2}{*}{\begin{tabular}[c]{@{}l@{}}ASN\_learnresult\_MAESTRO\_fix.dot\\ INPUT noise at 0.1\%\end{tabular}} & \CEAL & RS & (10,20) & 100 & 56806.28 \\* \cmidrule(l){2-6} 
 & MAT & RS & (20,30) & 26 & 98705.58 \\* \midrule
\multirow{2}{*}{\begin{tabular}[c]{@{}l@{}}ASN\_learnresult\_MAESTRO\_fix.dot\\ OUTPUT noise at 0.1\%\end{tabular}} & \CEAL & RS & (5,10) & 100 & 33890 \\* \cmidrule(l){2-6} 
 & MAT & KV & (20,30) & 19 & 53137.53 \\* \midrule
\multirow{2}{*}{\begin{tabular}[c]{@{}l@{}}Rabo\_learnresult\_MAESTRO\_fix.dot\\ INPUT noise at 0.01\%\end{tabular}} & MAT & TTT & (5,10) & 100 & 18143.9 \\* \cmidrule(l){2-6} 
 & \CEAL & RS & (5,10) & 100 & 23661.77 \\* \midrule
\multirow{2}{*}{\begin{tabular}[c]{@{}l@{}}Rabo\_learnresult\_MAESTRO\_fix.dot\\ OUTPUT noise at 0.01\%\end{tabular}} & MAT & TTT & (5,10) & 100 & 20639.11 \\* \cmidrule(l){2-6} 
 & \CEAL & RS & (5,10) & 100 & 23652.36 \\* \midrule
\multirow{2}{*}{\begin{tabular}[c]{@{}l@{}}Rabo\_learnresult\_MAESTRO\_fix.dot\\ INPUT noise at 0.05\%\end{tabular}} & \CEAL & RS & (5,10) & 100 & 25762.24 \\* \cmidrule(l){2-6} 
 & MAT & TTT & (20,30) & 93 & 78030.14 \\* \midrule
\multirow{2}{*}{\begin{tabular}[c]{@{}l@{}}Rabo\_learnresult\_MAESTRO\_fix.dot\\ OUTPUT noise at 0.05\%\end{tabular}} & \CEAL & RS & (5,10) & 100 & 25272.71 \\* \cmidrule(l){2-6} 
 & MAT & RS & (20,30) & 93 & 155886.7 \\* \midrule
\multirow{2}{*}{\begin{tabular}[c]{@{}l@{}}Rabo\_learnresult\_MAESTRO\_fix.dot\\ INPUT noise at 0.1\%\end{tabular}} & \CEAL & RS & (5,10) & 100 & 56333.1 \\* \cmidrule(l){2-6} 
 & MAT & RS & (20,30) & 25 & 65872.72 \\* \midrule
\multirow{2}{*}{\begin{tabular}[c]{@{}l@{}}Rabo\_learnresult\_MAESTRO\_fix.dot\\ OUTPUT noise at 0.1\%\end{tabular}} & \CEAL & RS & (5,10) & 100 & 34832.46 \\* \cmidrule(l){2-6} 
 & MAT & RS & (20,30) & 24 & 79005.62 \\* \midrule
\multirow{2}{*}{\begin{tabular}[c]{@{}l@{}}Rabo\_learnresult\_SecureCode\_Aut\_fix.dot\\ INPUT noise at 0.01\%\end{tabular}} & MAT & TTT & (5,10) & 100 & 13191.67 \\* \cmidrule(l){2-6} 
 & \CEAL & TTT & (5,10) & 100 & 28092.53 \\* \midrule
\multirow{2}{*}{\begin{tabular}[c]{@{}l@{}}Rabo\_learnresult\_SecureCode\_Aut\_fix.dot\\ OUTPUT noise at 0.01\%\end{tabular}} & MAT & TTT & (5,10) & 100 & 11904.83 \\* \cmidrule(l){2-6} 
 & \CEAL & TTT & (5,10) & 100 & 27709.41 \\* \midrule
\multirow{2}{*}{\begin{tabular}[c]{@{}l@{}}Rabo\_learnresult\_SecureCode\_Aut\_fix.dot\\ INPUT noise at 0.05\%\end{tabular}} & \CEAL & RS & (5,10) & 100 & 31835.22 \\* \cmidrule(l){2-6} 
 & MAT & KV & (20,30) & 95 & 80110.18 \\* \midrule
\multirow{2}{*}{\begin{tabular}[c]{@{}l@{}}Rabo\_learnresult\_SecureCode\_Aut\_fix.dot\\ OUTPUT noise at 0.05\%\end{tabular}} & \CEAL & RS & (5,10) & 100 & 30321.21 \\* \cmidrule(l){2-6} 
 & MAT & KV & (20,30) & 94 & 81503.62 \\* \midrule
\multirow{2}{*}{\begin{tabular}[c]{@{}l@{}}Rabo\_learnresult\_SecureCode\_Aut\_fix.dot\\ INPUT noise at 0.1\%\end{tabular}} & \CEAL & RS & (10,20) & 100 & 71161.74 \\* \cmidrule(l){2-6} 
 & MAT & RS & (20,30) & 40 & 68508.25 \\* \midrule
\multirow{2}{*}{\begin{tabular}[c]{@{}l@{}}Rabo\_learnresult\_SecureCode\_Aut\_fix.dot\\ OUTPUT noise at 0.1\%\end{tabular}} & \CEAL & RS & (5,10) & 100 & 43735.72 \\* \cmidrule(l){2-6} 
 & MAT & RS & (20,30) & 34 & 68287.74 \\* \midrule
\multirow{2}{*}{\begin{tabular}[c]{@{}l@{}}GnuTLS\_3.3.12\_client\_regular.dot\\ INPUT noise at 0.01\%\end{tabular}} & \CEAL & RS & (5,10) & 100 & 8700.15 \\* \cmidrule(l){2-6} 
 & MAT & KV & (5,10) & 100 & 13312.08 \\* \midrule
\multirow{2}{*}{\begin{tabular}[c]{@{}l@{}}GnuTLS\_3.3.12\_client\_regular.dot\\ OUTPUT noise at 0.01\%\end{tabular}} & \CEAL & RS & (5,10) & 100 & 8714.03 \\* \cmidrule(l){2-6} 
 & MAT & KV & (5,10) & 100 & 14611.65 \\* \midrule
\multirow{2}{*}{\begin{tabular}[c]{@{}l@{}}GnuTLS\_3.3.12\_client\_regular.dot\\ INPUT noise at 0.05\%\end{tabular}} & \CEAL & RS & (5,10) & 100 & 9383.59 \\* \cmidrule(l){2-6} 
 & MAT & KV & (5,10) & 100 & 13765.59 \\* \midrule
\multirow{2}{*}{\begin{tabular}[c]{@{}l@{}}GnuTLS\_3.3.12\_client\_regular.dot\\ OUTPUT noise at 0.05\%\end{tabular}} & \CEAL & RS & (5,10) & 100 & 9542.08 \\* \cmidrule(l){2-6} 
 & MAT & KV & (10,20) & 100 & 31961.76 \\* \midrule
\multirow{2}{*}{\begin{tabular}[c]{@{}l@{}}GnuTLS\_3.3.12\_client\_regular.dot\\ INPUT noise at 0.1\%\end{tabular}} & \CEAL & RS & (5,10) & 100 & 16366.54 \\* \cmidrule(l){2-6} 
 & MAT & KV & (20,30) & 100 & 56634.7 \\* \midrule
\multirow{2}{*}{\begin{tabular}[c]{@{}l@{}}GnuTLS\_3.3.12\_client\_regular.dot\\ OUTPUT noise at 0.1\%\end{tabular}} & \CEAL & RS & (5,10) & 100 & 13181.22 \\* \cmidrule(l){2-6} 
 & MAT & RS & (20,30) & 99 & 108314.29 \\* \midrule
\multirow{2}{*}{\begin{tabular}[c]{@{}l@{}}GnuTLS\_3.3.12\_server\_regular.dot\\ INPUT noise at 0.01\%\end{tabular}} & \CEAL & RS & (5,10) & 100 & 8700.66 \\* \cmidrule(l){2-6} 
 & MAT & KV & (5,10) & 100 & 12742.74 \\* \midrule
\multirow{2}{*}{\begin{tabular}[c]{@{}l@{}}GnuTLS\_3.3.12\_server\_regular.dot\\ OUTPUT noise at 0.01\%\end{tabular}} & \CEAL & RS & (5,10) & 100 & 8716.5 \\* \cmidrule(l){2-6} 
 & MAT & KV & (5,10) & 100 & 12744.21 \\* \midrule
\multirow{2}{*}{\begin{tabular}[c]{@{}l@{}}GnuTLS\_3.3.12\_server\_regular.dot\\ INPUT noise at 0.05\%\end{tabular}} & \CEAL & RS & (5,10) & 100 & 9238.11 \\* \cmidrule(l){2-6} 
 & MAT & RS & (5,10) & 100 & 21829.48 \\* \midrule
\multirow{2}{*}{\begin{tabular}[c]{@{}l@{}}GnuTLS\_3.3.12\_server\_regular.dot\\ OUTPUT noise at 0.05\%\end{tabular}} & \CEAL & RS & (5,10) & 100 & 9579.21 \\* \cmidrule(l){2-6} 
 & MAT & KV & (10,20) & 100 & 23058.97 \\* \midrule
\multirow{2}{*}{\begin{tabular}[c]{@{}l@{}}GnuTLS\_3.3.12\_server\_regular.dot\\ INPUT noise at 0.1\%\end{tabular}} & \CEAL & RS & (5,10) & 100 & 16750.42 \\* \cmidrule(l){2-6} 
 & MAT & KV & (20,30) & 100 & 52193.66 \\* \midrule
\multirow{2}{*}{\begin{tabular}[c]{@{}l@{}}GnuTLS\_3.3.12\_server\_regular.dot\\ OUTPUT noise at 0.1\%\end{tabular}} & \CEAL & RS & (5,10) & 100 & 13136.49 \\* \cmidrule(l){2-6} 
 & MAT & RS & (20,30) & 100 & 109055.26 \\* \midrule
\multirow{2}{*}{\begin{tabular}[c]{@{}l@{}}NSS\_3.17.4\_client\_regular.dot\\ INPUT noise at 0.01\%\end{tabular}} & \CEAL & RS & (5,10) & 100 & 9678.38 \\* \cmidrule(l){2-6} 
 & MAT & KV & (5,10) & 100 & 12744.42 \\* \midrule
\multirow{2}{*}{\begin{tabular}[c]{@{}l@{}}NSS\_3.17.4\_client\_regular.dot\\ OUTPUT noise at 0.01\%\end{tabular}} & \CEAL & RS & (5,10) & 100 & 9684.25 \\* \cmidrule(l){2-6} 
 & MAT & TTT & (5,10) & 100 & 12932.33 \\* \midrule
\multirow{2}{*}{\begin{tabular}[c]{@{}l@{}}NSS\_3.17.4\_client\_regular.dot\\ INPUT noise at 0.05\%\end{tabular}} & \CEAL & RS & (5,10) & 100 & 10643.94 \\* \cmidrule(l){2-6} 
 & MAT & TTT & (5,10) & 100 & 14856.42 \\* \midrule
\multirow{2}{*}{\begin{tabular}[c]{@{}l@{}}NSS\_3.17.4\_client\_regular.dot\\ OUTPUT noise at 0.05\%\end{tabular}} & \CEAL & RS & (5,10) & 100 & 10755.87 \\* \cmidrule(l){2-6} 
 & MAT & TTT & (5,10) & 100 & 14001.56 \\* \midrule
\multirow{2}{*}{\begin{tabular}[c]{@{}l@{}}NSS\_3.17.4\_client\_regular.dot\\ INPUT noise at 0.1\%\end{tabular}} & \CEAL & RS & (10,20) & 100 & 24055.3 \\* \cmidrule(l){2-6} 
 & MAT & KV & (10,20) & 100 & 27046.64 \\* \midrule
\multirow{2}{*}{\begin{tabular}[c]{@{}l@{}}NSS\_3.17.4\_client\_regular.dot\\ OUTPUT noise at 0.1\%\end{tabular}} & \CEAL & RS & (5,10) & 100 & 16435.53 \\* \cmidrule(l){2-6} 
 & MAT & TTT & (10,20) & 100 & 31956.55 \\* \midrule
\multirow{2}{*}{\begin{tabular}[c]{@{}l@{}}OpenSSL\_1.0.2\_server\_regular.dot\\ INPUT noise at 0.01\%\end{tabular}} & \CEAL & RS & (5,10) & 100 & 6417.43 \\* \cmidrule(l){2-6} 
 & MAT & KV & (5,10) & 100 & 10038.55 \\* \midrule
\multirow{2}{*}{\begin{tabular}[c]{@{}l@{}}OpenSSL\_1.0.2\_server\_regular.dot\\ OUTPUT noise at 0.01\%\end{tabular}} & \CEAL & RS & (5,10) & 100 & 6434.04 \\* \cmidrule(l){2-6} 
 & MAT & KV & (5,10) & 100 & 9668.83 \\* \midrule
\multirow{2}{*}{\begin{tabular}[c]{@{}l@{}}OpenSSL\_1.0.2\_server\_regular.dot\\ INPUT noise at 0.05\%\end{tabular}} & \CEAL & RS & (5,10) & 100 & 6825.86 \\* \cmidrule(l){2-6} 
 & MAT & KV & (10,20) & 100 & 19902.09 \\* \midrule
\multirow{2}{*}{\begin{tabular}[c]{@{}l@{}}OpenSSL\_1.0.2\_server\_regular.dot\\ OUTPUT noise at 0.05\%\end{tabular}} & \CEAL & RS & (5,10) & 100 & 7066.48 \\* \cmidrule(l){2-6} 
 & MAT & KV & (5,10) & 100 & 10986.84 \\* \midrule
\multirow{2}{*}{\begin{tabular}[c]{@{}l@{}}OpenSSL\_1.0.2\_server\_regular.dot\\ INPUT noise at 0.1\%\end{tabular}} & \CEAL & RS & (5,10) & 100 & 10072.76 \\* \cmidrule(l){2-6} 
 & MAT & TTT & (10,20) & 100 & 24247.78 \\* \midrule
\multirow{2}{*}{\begin{tabular}[c]{@{}l@{}}OpenSSL\_1.0.2\_server\_regular.dot\\ OUTPUT noise at 0.1\%\end{tabular}} & \CEAL & RS & (5,10) & 100 & 8997.39 \\* \cmidrule(l){2-6} 
 & MAT & KV & (20,30) & 100 & 51616.12 \\* \midrule
\multirow{2}{*}{\begin{tabular}[c]{@{}l@{}}Volksbank\_learnresult\_MAESTRO\_fix.dot\\ INPUT noise at 0.01\%\end{tabular}} & MAT & TTT & (5,10) & 100 & 7943.4 \\* \cmidrule(l){2-6} 
 & \CEAL & TTT & (5,10) & 100 & 17237.49 \\* \midrule
\multirow{2}{*}{\begin{tabular}[c]{@{}l@{}}Volksbank\_learnresult\_MAESTRO\_fix.dot\\ OUTPUT noise at 0.01\%\end{tabular}} & MAT & TTT & (5,10) & 100 & 8030.57 \\* \cmidrule(l){2-6} 
 & \CEAL & TTT & (5,10) & 100 & 17367.51 \\* \midrule
\multirow{2}{*}{\begin{tabular}[c]{@{}l@{}}Volksbank\_learnresult\_MAESTRO\_fix.dot\\ INPUT noise at 0.05\%\end{tabular}} & \CEAL & RS & (5,10) & 100 & 34281.38 \\* \cmidrule(l){2-6} 
 & MAT & RS & (20,30) & 99 & 63789.12 \\* \midrule
\multirow{2}{*}{\begin{tabular}[c]{@{}l@{}}Volksbank\_learnresult\_MAESTRO\_fix.dot\\ OUTPUT noise at 0.05\%\end{tabular}} & \CEAL & RS & (5,10) & 100 & 32941.97 \\* \cmidrule(l){2-6} 
 & MAT & RS & (20,30) & 100 & 56836.2 \\* \midrule
\multirow{2}{*}{\begin{tabular}[c]{@{}l@{}}Volksbank\_learnresult\_MAESTRO\_fix.dot\\ INPUT noise at 0.1\%\end{tabular}} & \CEAL & RS & (10,20) & 100 & 80374.4 \\* \cmidrule(l){2-6} 
 & MAT & RS & (20,30) & 68 & 55511.59 \\* \midrule
\multirow{2}{*}{\begin{tabular}[c]{@{}l@{}}Volksbank\_learnresult\_MAESTRO\_fix.dot\\ OUTPUT noise at 0.1\%\end{tabular}} & \CEAL & RS & (5,10) & 100 & 51674.12 \\* \cmidrule(l){2-6} 
 & MAT & RS & (20,30) & 71 & 55008.2 \\* \midrule
\multirow{2}{*}{\begin{tabular}[c]{@{}l@{}}NSS\_3.17.4\_server\_regular.dot\\ INPUT noise at 0.01\%\end{tabular}} & \CEAL & RS & (5,10) & 100 & 10638.69 \\* \cmidrule(l){2-6} 
 & MAT & TTT & (5,10) & 100 & 12000.78 \\* \midrule
\multirow{2}{*}{\begin{tabular}[c]{@{}l@{}}NSS\_3.17.4\_server\_regular.dot\\ OUTPUT noise at 0.01\%\end{tabular}} & \CEAL & RS & (5,10) & 100 & 10647.57 \\* \cmidrule(l){2-6} 
 & MAT & TTT & (5,10) & 100 & 11445.93 \\* \midrule
\multirow{2}{*}{\begin{tabular}[c]{@{}l@{}}NSS\_3.17.4\_server\_regular.dot\\ INPUT noise at 0.05\%\end{tabular}} & \CEAL & RS & (5,10) & 100 & 11705.27 \\* \cmidrule(l){2-6} 
 & MAT & TTT & (10,20) & 100 & 26472.28 \\* \midrule
\multirow{2}{*}{\begin{tabular}[c]{@{}l@{}}NSS\_3.17.4\_server\_regular.dot\\ OUTPUT noise at 0.05\%\end{tabular}} & \CEAL & RS & (5,10) & 100 & 11767.46 \\* \cmidrule(l){2-6} 
 & MAT & TTT & (5,10) & 100 & 12457.79 \\* \midrule
\multirow{2}{*}{\begin{tabular}[c]{@{}l@{}}NSS\_3.17.4\_server\_regular.dot\\ INPUT noise at 0.1\%\end{tabular}} & \CEAL & RS & (10,20) & 100 & 25801.2 \\* \cmidrule(l){2-6} 
 & MAT & KV & (10,20) & 100 & 31439.29 \\* \midrule
\multirow{2}{*}{\begin{tabular}[c]{@{}l@{}}NSS\_3.17.4\_server\_regular.dot\\ OUTPUT noise at 0.1\%\end{tabular}} & \CEAL & RS & (5,10) & 100 & 17591.27 \\* \cmidrule(l){2-6} 
 & MAT & TTT & (20,30) & 100 & 59005.09 \\* \midrule
\multirow{2}{*}{\begin{tabular}[c]{@{}l@{}}GnuTLS\_3.3.12\_client\_full.dot\\ INPUT noise at 0.01\%\end{tabular}} & MAT & TTT & (5,10) & 100 & 91160.43 \\* \cmidrule(l){2-6} 
 & \CEAL & KV & (5,10) & 100 & 205658.94 \\* \midrule
\multirow{2}{*}{\begin{tabular}[c]{@{}l@{}}GnuTLS\_3.3.12\_client\_full.dot\\ OUTPUT noise at 0.01\%\end{tabular}} & MAT & TTT & (5,10) & 100 & 113987.9 \\* \cmidrule(l){2-6} 
 & \CEAL & TTT & (5,10) & 100 & 194448.25 \\* \midrule
\multirow{2}{*}{\begin{tabular}[c]{@{}l@{}}GnuTLS\_3.3.12\_client\_full.dot\\ INPUT noise at 0.05\%\end{tabular}} & \CEAL & TTT & (5,10) & 100 & 201814.9 \\* \cmidrule(l){2-6} 
 & MAT & TTT & (10,20) & 100 & 227056.17 \\* \midrule
\multirow{2}{*}{\begin{tabular}[c]{@{}l@{}}GnuTLS\_3.3.12\_client\_full.dot\\ OUTPUT noise at 0.05\%\end{tabular}} & \CEAL & TTT & (5,10) & 100 & 228027.96 \\* \cmidrule(l){2-6} 
 & MAT & KV & (10,20) & 100 & 247702.66 \\* \midrule
\multirow{2}{*}{\begin{tabular}[c]{@{}l@{}}GnuTLS\_3.3.12\_client\_full.dot\\ INPUT noise at 0.1\%\end{tabular}} & MAT & KV & (20,30) & 100 & 415169.16 \\* \cmidrule(l){2-6} 
 & \CEAL & TTT & (10,20) & 100 & 466326.9 \\* \midrule
\multirow{2}{*}{\begin{tabular}[c]{@{}l@{}}GnuTLS\_3.3.12\_client\_full.dot\\ OUTPUT noise at 0.1\%\end{tabular}} & \CEAL & TTT & (10,20) & 100 & 550787.09 \\* \cmidrule(l){2-6} 
 & MAT & KV & (20,30) & 98 & 555802.05 \\* \midrule
\multirow{2}{*}{\begin{tabular}[c]{@{}l@{}}GnuTLS\_3.3.12\_server\_full.dot\\ INPUT noise at 0.01\%\end{tabular}} & MAT & KV & (5,10) & 100 & 36121.47 \\* \cmidrule(l){2-6} 
 & \CEAL & TTT & (5,10) & 100 & 73974.93 \\* \midrule
\multirow{2}{*}{\begin{tabular}[c]{@{}l@{}}GnuTLS\_3.3.12\_server\_full.dot\\ OUTPUT noise at 0.01\%\end{tabular}} & MAT & KV & (5,10) & 100 & 33031.4 \\* \cmidrule(l){2-6} 
 & \CEAL & TTT & (5,10) & 100 & 54344.32 \\* \midrule
\multirow{2}{*}{\begin{tabular}[c]{@{}l@{}}GnuTLS\_3.3.12\_server\_full.dot\\ INPUT noise at 0.05\%\end{tabular}} & \CEAL & KV & (5,10) & 100 & 61347.94 \\* \cmidrule(l){2-6} 
 & MAT & TTT & (10,20) & 100 & 71734.43 \\* \midrule
\multirow{2}{*}{\begin{tabular}[c]{@{}l@{}}GnuTLS\_3.3.12\_server\_full.dot\\ OUTPUT noise at 0.05\%\end{tabular}} & \CEAL & LSHARP & (5,10) & 100 & 64307.09 \\* \cmidrule(l){2-6} 
 & MAT & TTT & (10,20) & 100 & 68059.76 \\* \midrule
\multirow{2}{*}{\begin{tabular}[c]{@{}l@{}}GnuTLS\_3.3.12\_server\_full.dot\\ INPUT noise at 0.1\%\end{tabular}} & \CEAL & TTT & (5,10) & 100 & 64513.07 \\* \cmidrule(l){2-6} 
 & MAT & TTT & (20,30) & 100 & 138291 \\* \midrule
\multirow{2}{*}{\begin{tabular}[c]{@{}l@{}}GnuTLS\_3.3.12\_server\_full.dot\\ OUTPUT noise at 0.1\%\end{tabular}} & \CEAL & LSHARP & (5,10) & 100 & 48661.21 \\* \cmidrule(l){2-6} 
 & MAT & LSHARP & (20,30) & 100 & 178243.28 \\* \midrule
\multirow{2}{*}{\begin{tabular}[c]{@{}l@{}}learnresult\_fix.dot\\ INPUT noise at 0.01\%\end{tabular}} & MAT & TTT & (5,10) & 100 & 13561.85 \\* \cmidrule(l){2-6} 
 & \CEAL & TTT & (5,10) & 100 & 25606.65 \\* \midrule
\multirow{2}{*}{\begin{tabular}[c]{@{}l@{}}learnresult\_fix.dot\\ OUTPUT noise at 0.01\%\end{tabular}} & MAT & TTT & (5,10) & 100 & 13682.06 \\* \cmidrule(l){2-6} 
 & \CEAL & TTT & (5,10) & 100 & 25727.23 \\* \midrule
\multirow{2}{*}{\begin{tabular}[c]{@{}l@{}}learnresult\_fix.dot\\ INPUT noise at 0.05\%\end{tabular}} & \CEAL & TTT & (5,10) & 100 & 54883.73 \\* \cmidrule(l){2-6} 
 & MAT & TTT & (20,30) & 100 & 57206.25 \\* \midrule
\multirow{2}{*}{\begin{tabular}[c]{@{}l@{}}learnresult\_fix.dot\\ OUTPUT noise at 0.05\%\end{tabular}} & \CEAL & RS & (5,10) & 100 & 50620.84 \\* \cmidrule(l){2-6} 
 & MAT & RS & (10,20) & 99 & 49926.49 \\* \midrule
\multirow{2}{*}{\begin{tabular}[c]{@{}l@{}}learnresult\_fix.dot\\ INPUT noise at 0.1\%\end{tabular}} & \CEAL & RS & (10,20) & 100 & 121189.06 \\* \cmidrule(l){2-6} 
 & MAT & RS & (20,30) & 48 & 82305.83 \\* \midrule
\multirow{2}{*}{\begin{tabular}[c]{@{}l@{}}learnresult\_fix.dot\\ OUTPUT noise at 0.1\%\end{tabular}} & \CEAL & RS & (5,10) & 100 & 85413.46 \\* \cmidrule(l){2-6} 
 & MAT & RS & (20,30) & 74 & 94502.73 \\* \midrule
\multirow{2}{*}{\begin{tabular}[c]{@{}l@{}}OpenSSL\_1.0.2\_client\_full.dot\\ INPUT noise at 0.01\%\end{tabular}} & MAT & TTT & (5,10) & 100 & 68973.07 \\* \cmidrule(l){2-6} 
 & \CEAL & TTT & (5,10) & 100 & 135919.55 \\* \midrule
\multirow{2}{*}{\begin{tabular}[c]{@{}l@{}}OpenSSL\_1.0.2\_client\_full.dot\\ OUTPUT noise at 0.01\%\end{tabular}} & MAT & TTT & (5,10) & 100 & 65188.04 \\* \cmidrule(l){2-6} 
 & \CEAL & KV & (5,10) & 100 & 117615.6 \\* \midrule
\multirow{2}{*}{\begin{tabular}[c]{@{}l@{}}OpenSSL\_1.0.2\_client\_full.dot\\ INPUT noise at 0.05\%\end{tabular}} & \CEAL & TTT & (5,10) & 100 & 107252.09 \\* \cmidrule(l){2-6} 
 & MAT & KV & (10,20) & 100 & 130950.71 \\* \midrule
\multirow{2}{*}{\begin{tabular}[c]{@{}l@{}}OpenSSL\_1.0.2\_client\_full.dot\\ OUTPUT noise at 0.05\%\end{tabular}} & MAT & KV & (10,20) & 100 & 141440.98 \\* \cmidrule(l){2-6} 
 & \CEAL & KV & (5,10) & 100 & 149671.51 \\* \midrule
\multirow{2}{*}{\begin{tabular}[c]{@{}l@{}}OpenSSL\_1.0.2\_client\_full.dot\\ INPUT noise at 0.1\%\end{tabular}} & \CEAL & KV & (10,20) & 100 & 219337.62 \\* \cmidrule(l){2-6} 
 & MAT & TTT & (20,30) & 100 & 301707.18 \\* \midrule
\multirow{2}{*}{\begin{tabular}[c]{@{}l@{}}OpenSSL\_1.0.2\_client\_full.dot\\ OUTPUT noise at 0.1\%\end{tabular}} & MAT & KV & (20,30) & 100 & 298232.72 \\* \cmidrule(l){2-6} 
 & \CEAL & TTT & (10,20) & 100 & 415644.59 \\* \midrule
\multirow{2}{*}{\begin{tabular}[c]{@{}l@{}}RSA\_BSAFE\_C\_4.0.4\_server\_regular.dot\\ INPUT noise at 0.01\%\end{tabular}} & \CEAL & RS & (5,10) & 100 & 11271.48 \\* \cmidrule(l){2-6} 
 & MAT & TTT & (5,10) & 100 & 14535.15 \\* \midrule
\multirow{2}{*}{\begin{tabular}[c]{@{}l@{}}RSA\_BSAFE\_C\_4.0.4\_server\_regular.dot\\ OUTPUT noise at 0.01\%\end{tabular}} & \CEAL & RS & (5,10) & 100 & 11274.89 \\* \cmidrule(l){2-6} 
 & MAT & TTT & (5,10) & 100 & 15962.98 \\* \midrule
\multirow{2}{*}{\begin{tabular}[c]{@{}l@{}}RSA\_BSAFE\_C\_4.0.4\_server\_regular.dot\\ INPUT noise at 0.05\%\end{tabular}} & \CEAL & RS & (5,10) & 100 & 11994.71 \\* \cmidrule(l){2-6} 
 & MAT & TTT & (10,20) & 100 & 34042.37 \\* \midrule
\multirow{2}{*}{\begin{tabular}[c]{@{}l@{}}RSA\_BSAFE\_C\_4.0.4\_server\_regular.dot\\ OUTPUT noise at 0.05\%\end{tabular}} & \CEAL & RS & (5,10) & 100 & 12175.24 \\* \cmidrule(l){2-6} 
 & MAT & TTT & (10,20) & 100 & 35512.27 \\* \midrule
\multirow{2}{*}{\begin{tabular}[c]{@{}l@{}}RSA\_BSAFE\_C\_4.0.4\_server\_regular.dot\\ INPUT noise at 0.1\%\end{tabular}} & \CEAL & RS & (5,10) & 100 & 20307.92 \\* \cmidrule(l){2-6} 
 & MAT & TTT & (20,30) & 100 & 59977.7 \\* \midrule
\multirow{2}{*}{\begin{tabular}[c]{@{}l@{}}RSA\_BSAFE\_C\_4.0.4\_server\_regular.dot\\ OUTPUT noise at 0.1\%\end{tabular}} & \CEAL & RS & (5,10) & 100 & 15649.11 \\* \cmidrule(l){2-6} 
 & MAT & TTT & (10,20) & 100 & 38083.16 \\* \midrule
\multirow{2}{*}{\begin{tabular}[c]{@{}l@{}}OpenSSL\_1.0.1g\_client\_regular.dot\\ INPUT noise at 0.01\%\end{tabular}} & \CEAL & RS & (5,10) & 100 & 11092.08 \\* \cmidrule(l){2-6} 
 & MAT & TTT & (5,10) & 100 & 17154.8 \\* \midrule
\multirow{2}{*}{\begin{tabular}[c]{@{}l@{}}OpenSSL\_1.0.1g\_client\_regular.dot\\ OUTPUT noise at 0.01\%\end{tabular}} & \CEAL & RS & (5,10) & 100 & 11104.54 \\* \cmidrule(l){2-6} 
 & MAT & TTT & (5,10) & 100 & 16591.06 \\* \midrule
\multirow{2}{*}{\begin{tabular}[c]{@{}l@{}}OpenSSL\_1.0.1g\_client\_regular.dot\\ INPUT noise at 0.05\%\end{tabular}} & \CEAL & RS & (5,10) & 100 & 12026.39 \\* \cmidrule(l){2-6} 
 & MAT & TTT & (10,20) & 100 & 38671.17 \\* \midrule
\multirow{2}{*}{\begin{tabular}[c]{@{}l@{}}OpenSSL\_1.0.1g\_client\_regular.dot\\ OUTPUT noise at 0.05\%\end{tabular}} & \CEAL & RS & (5,10) & 100 & 12157.83 \\* \cmidrule(l){2-6} 
 & MAT & TTT & (10,20) & 100 & 42172.35 \\* \midrule
\multirow{2}{*}{\begin{tabular}[c]{@{}l@{}}OpenSSL\_1.0.1g\_client\_regular.dot\\ INPUT noise at 0.1\%\end{tabular}} & \CEAL & RS & (5,10) & 100 & 22434.34 \\* \cmidrule(l){2-6} 
 & MAT & TTT & (20,30) & 100 & 81388.63 \\* \midrule
\multirow{2}{*}{\begin{tabular}[c]{@{}l@{}}OpenSSL\_1.0.1g\_client\_regular.dot\\ OUTPUT noise at 0.1\%\end{tabular}} & \CEAL & RS & (5,10) & 100 & 16134.44 \\* \cmidrule(l){2-6} 
 & MAT & TTT & (20,30) & 99 & 93221.85 \\* \midrule
\multirow{2}{*}{\begin{tabular}[c]{@{}l@{}}OpenSSL\_1.0.1l\_server\_regular.dot\\ INPUT noise at 0.01\%\end{tabular}} & MAT & KV & (5,10) & 100 & 83568.33 \\* \cmidrule(l){2-6} 
 & \CEAL & KV & (5,10) & 100 & 178480.85 \\* \midrule
\multirow{2}{*}{\begin{tabular}[c]{@{}l@{}}OpenSSL\_1.0.1l\_server\_regular.dot\\ OUTPUT noise at 0.01\%\end{tabular}} & MAT & KV & (5,10) & 100 & 76702.16 \\* \cmidrule(l){2-6} 
 & \CEAL & TTT & (5,10) & 100 & 170014.66 \\* \midrule
\multirow{2}{*}{\begin{tabular}[c]{@{}l@{}}OpenSSL\_1.0.1l\_server\_regular.dot\\ INPUT noise at 0.05\%\end{tabular}} & \CEAL & KV & (5,10) & 100 & 168956.14 \\* \cmidrule(l){2-6} 
 & MAT & TTT & (10,20) & 100 & 219047.93 \\* \midrule
\multirow{2}{*}{\begin{tabular}[c]{@{}l@{}}OpenSSL\_1.0.1l\_server\_regular.dot\\ OUTPUT noise at 0.05\%\end{tabular}} & MAT & KV & (10,20) & 100 & 166207.58 \\* \cmidrule(l){2-6} 
 & \CEAL & KV & (5,10) & 100 & 201153.28 \\* \midrule
\multirow{2}{*}{\begin{tabular}[c]{@{}l@{}}OpenSSL\_1.0.1l\_server\_regular.dot\\ INPUT noise at 0.1\%\end{tabular}} & MAT & KV & (20,30) & 100 & 400721.07 \\* \cmidrule(l){2-6} 
 & \CEAL & KV & (10,20) & 100 & 416222.86 \\* \midrule
\multirow{2}{*}{\begin{tabular}[c]{@{}l@{}}OpenSSL\_1.0.1l\_server\_regular.dot\\ OUTPUT noise at 0.1\%\end{tabular}} & \CEAL & KV & (10,20) & 100 & 662679.52 \\* \cmidrule(l){2-6} 
 & MAT & KV & (20,30) & 94 & 467222.21 \\* \midrule
\multirow{2}{*}{\begin{tabular}[c]{@{}l@{}}GnuTLS\_3.3.8\_client\_regular.dot\\ INPUT noise at 0.01\%\end{tabular}} & MAT & KV & (5,10) & 100 & 1074206.02 \\* \cmidrule(l){2-6} 
 & \CEAL & KV & (5,10) & 100 & 2270972.48 \\* \midrule
\multirow{2}{*}{\begin{tabular}[c]{@{}l@{}}GnuTLS\_3.3.8\_client\_regular.dot\\ OUTPUT noise at 0.01\%\end{tabular}} & MAT & KV & (5,10) & 100 & 1157429.8 \\* \cmidrule(l){2-6} 
 & \CEAL & KV & (5,10) & 100 & 2363794.93 \\* \midrule
\multirow{2}{*}{\begin{tabular}[c]{@{}l@{}}GnuTLS\_3.3.8\_client\_regular.dot\\ INPUT noise at 0.05\%\end{tabular}} & \CEAL & KV & (5,10) & 100 & 1547844.22 \\* \cmidrule(l){2-6} 
 & MAT & TTT & (10,20) & 100 & 3137292.43 \\* \midrule
\multirow{2}{*}{\begin{tabular}[c]{@{}l@{}}GnuTLS\_3.3.8\_client\_regular.dot\\ OUTPUT noise at 0.05\%\end{tabular}} & \CEAL & KV & (5,10) & 100 & 2251216.77 \\* \cmidrule(l){2-6} 
 & MAT & LSHARP & (20,30) & 100 & 5885067.18 \\* \midrule
\multirow{2}{*}{\begin{tabular}[c]{@{}l@{}}GnuTLS\_3.3.8\_client\_regular.dot\\ INPUT noise at 0.1\%\end{tabular}} & \CEAL & TTT & (10,20) & 100 & 6898721.62 \\* \cmidrule(l){2-6} 
 & MAT & KV & (20,30) & 88 & 4581787.99 \\* \midrule
\multirow{2}{*}{\begin{tabular}[c]{@{}l@{}}GnuTLS\_3.3.8\_client\_regular.dot\\ OUTPUT noise at 0.1\%\end{tabular}} & \CEAL & KV & (20,30) & 100 & 7484163.22 \\* \cmidrule(l){2-6} 
 & MAT & KV & (20,30) & 18 & 1744942.17 \\* \midrule
\multirow{2}{*}{\begin{tabular}[c]{@{}l@{}}NSS\_3.17.4\_client\_full.dot\\ INPUT noise at 0.01\%\end{tabular}} & MAT & KV & (5,10) & 100 & 151067.11 \\* \cmidrule(l){2-6} 
 & \CEAL & TTT & (5,10) & 100 & 254432.67 \\* \midrule
\multirow{2}{*}{\begin{tabular}[c]{@{}l@{}}NSS\_3.17.4\_client\_full.dot\\ OUTPUT noise at 0.01\%\end{tabular}} & MAT & LSHARP & (5,10) & 100 & 120249.1 \\* \cmidrule(l){2-6} 
 & \CEAL & LSHARP & (5,10) & 100 & 205706.28 \\* \midrule
\multirow{2}{*}{\begin{tabular}[c]{@{}l@{}}NSS\_3.17.4\_client\_full.dot\\ INPUT noise at 0.05\%\end{tabular}} & \CEAL & TTT & (5,10) & 100 & 212075.56 \\* \cmidrule(l){2-6} 
 & MAT & TTT & (10,20) & 100 & 220689.75 \\* \midrule
\multirow{2}{*}{\begin{tabular}[c]{@{}l@{}}NSS\_3.17.4\_client\_full.dot\\ OUTPUT noise at 0.05\%\end{tabular}} & \CEAL & TTT & (5,10) & 100 & 155770.93 \\* \cmidrule(l){2-6} 
 & MAT & TTT & (10,20) & 100 & 215655.37 \\* \midrule
\multirow{2}{*}{\begin{tabular}[c]{@{}l@{}}NSS\_3.17.4\_client\_full.dot\\ INPUT noise at 0.1\%\end{tabular}} & MAT & TTT & (20,30) & 100 & 502452.08 \\* \cmidrule(l){2-6} 
 & \CEAL & TTT & (10,20) & 100 & 532995.13 \\* \midrule
\multirow{2}{*}{\begin{tabular}[c]{@{}l@{}}NSS\_3.17.4\_client\_full.dot\\ OUTPUT noise at 0.1\%\end{tabular}} & \CEAL & KV & (5,10) & 100 & 277081.72 \\* \cmidrule(l){2-6} 
 & MAT & TTT & (20,30) & 97 & 542911.65 \\* \midrule
\multirow{2}{*}{\begin{tabular}[c]{@{}l@{}}OpenSSL\_1.0.1j\_server\_regular.dot\\ INPUT noise at 0.01\%\end{tabular}} & MAT & KV & (5,10) & 100 & 98860.19 \\* \cmidrule(l){2-6} 
 & \CEAL & TTT & (5,10) & 100 & 187712.24 \\* \midrule
\multirow{2}{*}{\begin{tabular}[c]{@{}l@{}}OpenSSL\_1.0.1j\_server\_regular.dot\\ OUTPUT noise at 0.01\%\end{tabular}} & MAT & KV & (5,10) & 100 & 114944.42 \\* \cmidrule(l){2-6} 
 & \CEAL & TTT & (5,10) & 100 & 192309.94 \\* \midrule
\multirow{2}{*}{\begin{tabular}[c]{@{}l@{}}OpenSSL\_1.0.1j\_server\_regular.dot\\ INPUT noise at 0.05\%\end{tabular}} & \CEAL & KV & (5,10) & 100 & 193923.29 \\* \cmidrule(l){2-6} 
 & MAT & KV & (10,20) & 100 & 195054.36 \\* \midrule
\multirow{2}{*}{\begin{tabular}[c]{@{}l@{}}OpenSSL\_1.0.1j\_server\_regular.dot\\ OUTPUT noise at 0.05\%\end{tabular}} & MAT & KV & (10,20) & 100 & 232477.71 \\* \cmidrule(l){2-6} 
 & \CEAL & KV & (5,10) & 100 & 234364.67 \\* \midrule
\multirow{2}{*}{\begin{tabular}[c]{@{}l@{}}OpenSSL\_1.0.1j\_server\_regular.dot\\ INPUT noise at 0.1\%\end{tabular}} & MAT & KV & (20,30) & 100 & 427809.6 \\* \cmidrule(l){2-6} 
 & \CEAL & LSHARP & (10,20) & 100 & 488676.44 \\* \midrule
\multirow{2}{*}{\begin{tabular}[c]{@{}l@{}}OpenSSL\_1.0.1j\_server\_regular.dot\\ OUTPUT noise at 0.1\%\end{tabular}} & \CEAL & KV & (10,20) & 100 & 779067.68 \\* \cmidrule(l){2-6} 
 & MAT & TTT & (20,30) & 83 & 501805.19 \\* \midrule
\multirow{2}{*}{\begin{tabular}[c]{@{}l@{}}GnuTLS\_3.3.8\_server\_regular.dot\\ INPUT noise at 0.01\%\end{tabular}} & MAT & RS & (5,10) & 100 & 305769.92 \\* \cmidrule(l){2-6} 
 & \CEAL & RS & (5,10) & 100 & 583269.31 \\* \midrule
\multirow{2}{*}{\begin{tabular}[c]{@{}l@{}}GnuTLS\_3.3.8\_server\_regular.dot\\ OUTPUT noise at 0.01\%\end{tabular}} & MAT & RS & (5,10) & 100 & 317981.5 \\* \cmidrule(l){2-6} 
 & \CEAL & RS & (5,10) & 100 & 565932.75 \\* \midrule
\multirow{2}{*}{\begin{tabular}[c]{@{}l@{}}GnuTLS\_3.3.8\_server\_regular.dot\\ INPUT noise at 0.05\%\end{tabular}} & MAT & RS & (10,20) & 100 & 593180.14 \\* \cmidrule(l){2-6} 
 & \CEAL & KV & (5,10) & 100 & 712136.47 \\* \midrule
\multirow{2}{*}{\begin{tabular}[c]{@{}l@{}}GnuTLS\_3.3.8\_server\_regular.dot\\ OUTPUT noise at 0.05\%\end{tabular}} & \CEAL & KV & (5,10) & 100 & 978550.49 \\* \cmidrule(l){2-6} 
 & MAT & LSHARP & (10,20) & 100 & 1124654.69 \\* \midrule
\multirow{2}{*}{\begin{tabular}[c]{@{}l@{}}GnuTLS\_3.3.8\_server\_regular.dot\\ INPUT noise at 0.1\%\end{tabular}} & \CEAL & KV & (10,20) & 100 & 1526791.66 \\* \cmidrule(l){2-6} 
 & MAT & RS & (20,30) & 98 & 1425113.07 \\* \midrule
\multirow{2}{*}{\begin{tabular}[c]{@{}l@{}}GnuTLS\_3.3.8\_server\_regular.dot\\ OUTPUT noise at 0.1\%\end{tabular}} & \CEAL & KV & (10,20) & 100 & 2954613.67 \\* \cmidrule(l){2-6} 
 & MAT & RS & (20,30) & 61 & 1067245.87 \\* \midrule
\multirow{2}{*}{\begin{tabular}[c]{@{}l@{}}TCP\_FreeBSD\_Client.dot\\ INPUT noise at 0.01\%\end{tabular}} & MAT & TTT & (5,10) & 100 & 51471.43 \\* \cmidrule(l){2-6} 
 & \CEAL & TTT & (5,10) & 100 & 83117.33 \\* \midrule
\multirow{2}{*}{\begin{tabular}[c]{@{}l@{}}TCP\_FreeBSD\_Client.dot\\ OUTPUT noise at 0.01\%\end{tabular}} & MAT & TTT & (5,10) & 100 & 49144.13 \\* \cmidrule(l){2-6} 
 & \CEAL & KV & (5,10) & 100 & 95920.54 \\* \midrule
\multirow{2}{*}{\begin{tabular}[c]{@{}l@{}}TCP\_FreeBSD\_Client.dot\\ INPUT noise at 0.05\%\end{tabular}} & \CEAL & KV & (10,20) & 100 & 263038.67 \\* \cmidrule(l){2-6} 
 & MAT & RS & (20,30) & 87 & 279107.21 \\* \midrule
\multirow{2}{*}{\begin{tabular}[c]{@{}l@{}}TCP\_FreeBSD\_Client.dot\\ OUTPUT noise at 0.05\%\end{tabular}} & \CEAL & KV & (10,20) & 100 & 344807.82 \\* \cmidrule(l){2-6} 
 & MAT & RS & (20,30) & 86 & 248240.19 \\* \midrule
\multirow{2}{*}{\begin{tabular}[c]{@{}l@{}}TCP\_FreeBSD\_Client.dot\\ INPUT noise at 0.1\%\end{tabular}} & \CEAL & TTT & (20,30) & 100 & 4443575.83 \\* \cmidrule(l){2-6} 
 & MAT & RS & (20,30) & 22 & 149063.14 \\* \midrule
\multirow{2}{*}{\begin{tabular}[c]{@{}l@{}}TCP\_FreeBSD\_Client.dot\\ OUTPUT noise at 0.1\%\end{tabular}} & \CEAL & TTT & (20,30) & 100 & 6302307.48 \\* \cmidrule(l){2-6} 
 & MAT & RS & (20,30) & 11 & 135983.09 \\* \midrule
\multirow{2}{*}{\begin{tabular}[c]{@{}l@{}}TCP\_Windows8\_Client.dot\\ INPUT noise at 0.01\%\end{tabular}} & MAT & TTT & (5,10) & 100 & 48855.27 \\* \cmidrule(l){2-6} 
 & \CEAL & LSHARP & (5,10) & 100 & 78782.79 \\* \midrule
\multirow{2}{*}{\begin{tabular}[c]{@{}l@{}}TCP\_Windows8\_Client.dot\\ OUTPUT noise at 0.01\%\end{tabular}} & MAT & TTT & (5,10) & 100 & 48247 \\* \cmidrule(l){2-6} 
 & \CEAL & KV & (5,10) & 100 & 90816.92 \\* \midrule
\multirow{2}{*}{\begin{tabular}[c]{@{}l@{}}TCP\_Windows8\_Client.dot\\ INPUT noise at 0.05\%\end{tabular}} & \CEAL & KV & (5,10) & 100 & 95279.01 \\* \cmidrule(l){2-6} 
 & MAT & RS & (10,20) & 100 & 157329.34 \\* \midrule
\multirow{2}{*}{\begin{tabular}[c]{@{}l@{}}TCP\_Windows8\_Client.dot\\ OUTPUT noise at 0.05\%\end{tabular}} & \CEAL & TTT & (5,10) & 100 & 94385.49 \\* \cmidrule(l){2-6} 
 & MAT & KV & (10,20) & 100 & 109640.61 \\* \midrule
\multirow{2}{*}{\begin{tabular}[c]{@{}l@{}}TCP\_Windows8\_Client.dot\\ INPUT noise at 0.1\%\end{tabular}} & \CEAL & TTT & (10,20) & 100 & 241756.52 \\* \cmidrule(l){2-6} 
 & MAT & TTT & (20,30) & 93 & 214732.66 \\* \midrule
\multirow{2}{*}{\begin{tabular}[c]{@{}l@{}}TCP\_Windows8\_Client.dot\\ OUTPUT noise at 0.1\%\end{tabular}} & \CEAL & TTT & (10,20) & 100 & 236025.16 \\* \cmidrule(l){2-6} 
 & MAT & RS & (20,30) & 93 & 319211.55 \\* \midrule
\multirow{2}{*}{\begin{tabular}[c]{@{}l@{}}GnuTLS\_3.3.8\_client\_full.dot\\ INPUT noise at 0.01\%\end{tabular}} & MAT & KV & (5,10) & 100 & 6374596.78 \\* \cmidrule(l){2-6} 
 & \CEAL & KV & (5,10) & 100 & 9383499.02 \\* \midrule
\multirow{2}{*}{\begin{tabular}[c]{@{}l@{}}GnuTLS\_3.3.8\_client\_full.dot\\ OUTPUT noise at 0.01\%\end{tabular}} & MAT & TTT & (5,10) & 100 & 7192497.09 \\* \cmidrule(l){2-6} 
 & \CEAL & KV & (5,10) & 100 & 10209832.74 \\* \midrule
\multirow{2}{*}{\begin{tabular}[c]{@{}l@{}}GnuTLS\_3.3.8\_client\_full.dot\\ INPUT noise at 0.05\%\end{tabular}} & \CEAL & TTT & (10,20) & 100 & 20145049 \\* \cmidrule(l){2-6} 
 & MAT & KV & (20,30) & 100 & 23124047.98 \\* \midrule
\multirow{2}{*}{\begin{tabular}[c]{@{}l@{}}GnuTLS\_3.3.8\_client\_full.dot\\ OUTPUT noise at 0.05\%\end{tabular}} & \CEAL & RS & (10,20) & 100 & 22228894.2 \\* \cmidrule(l){2-6} 
 & MAT & RS & (20,30) & 100 & 26922957.92 \\* \midrule
\multirow{2}{*}{\begin{tabular}[c]{@{}l@{}}GnuTLS\_3.3.8\_client\_full.dot\\ INPUT noise at 0.1\%\end{tabular}} & \CEAL & RS & (20,30) & 100 & 42055100.5 \\* \cmidrule(l){2-6} 
 & MAT & RS & (20,30) & 56 & 21071401.11 \\* \midrule
\multirow{2}{*}{\begin{tabular}[c]{@{}l@{}}GnuTLS\_3.3.8\_client\_full.dot\\ OUTPUT noise at 0.1\%\end{tabular}} & \CEAL & TTT & (5,10) & 99 & 19156049.74 \\* \cmidrule(l){2-6} 
 & MAT & LSHARP & (20,30) & 3 & 4953119 \\* \midrule
\multirow{2}{*}{\begin{tabular}[c]{@{}l@{}}TCP\_Linux\_Client.dot\\ INPUT noise at 0.01\%\end{tabular}} & MAT & TTT & (5,10) & 100 & 215066.05 \\* \cmidrule(l){2-6} 
 & \CEAL & LSHARP & (5,10) & 100 & 325839.84 \\* \midrule
\multirow{2}{*}{\begin{tabular}[c]{@{}l@{}}TCP\_Linux\_Client.dot\\ OUTPUT noise at 0.01\%\end{tabular}} & MAT & TTT & (5,10) & 100 & 183551.34 \\* \cmidrule(l){2-6} 
 & \CEAL & TTT & (5,10) & 100 & 337599.48 \\* \midrule
\multirow{2}{*}{\begin{tabular}[c]{@{}l@{}}TCP\_Linux\_Client.dot\\ INPUT noise at 0.05\%\end{tabular}} & \CEAL & TTT & (10,20) & 100 & 1689401.93 \\* \cmidrule(l){2-6} 
 & MAT & RS & (20,30) & 51 & 802177.67 \\* \midrule
\multirow{2}{*}{\begin{tabular}[c]{@{}l@{}}TCP\_Linux\_Client.dot\\ OUTPUT noise at 0.05\%\end{tabular}} & \CEAL & KV & (20,30) & 100 & 1872055.85 \\* \cmidrule(l){2-6} 
 & MAT & RS & (20,30) & 49 & 801882.9 \\* \midrule
\multirow{2}{*}{\begin{tabular}[c]{@{}l@{}}TCP\_Linux\_Client.dot\\ INPUT noise at 0.1\%\end{tabular}} & \CEAL & TTT & (20,30) & 99 & 15218384.08 \\* \cmidrule(l){2-6} 
 & MAT & TTT & (20,30) & 1 & 105619 \\* \midrule
\multirow{2}{*}{\begin{tabular}[c]{@{}l@{}}TCP\_Linux\_Client.dot\\ OUTPUT noise at 0.1\%\end{tabular}} & \CEAL & TTT & (10,20) & 95 & 24065009.26 \\* \cmidrule(l){2-6} 
 & MAT & TTT & (10,20) & 1 & 57169 \\* \midrule
\multirow{2}{*}{\begin{tabular}[c]{@{}l@{}}GnuTLS\_3.3.8\_server\_full.dot\\ INPUT noise at 0.01\%\end{tabular}} & MAT & RS & (5,10) & 100 & 1257996.28 \\* \cmidrule(l){2-6} 
 & \CEAL & RS & (5,10) & 100 & 2066983.82 \\* \midrule
\multirow{2}{*}{\begin{tabular}[c]{@{}l@{}}GnuTLS\_3.3.8\_server\_full.dot\\ OUTPUT noise at 0.01\%\end{tabular}} & MAT & KV & (5,10) & 100 & 1425157.15 \\* \cmidrule(l){2-6} 
 & \CEAL & RS & (5,10) & 100 & 1850643.2 \\* \midrule
\multirow{2}{*}{\begin{tabular}[c]{@{}l@{}}GnuTLS\_3.3.8\_server\_full.dot\\ INPUT noise at 0.05\%\end{tabular}} & \CEAL & LSHARP & (5,10) & 100 & 1992378.22 \\* \cmidrule(l){2-6} 
 & MAT & RS & (10,20) & 100 & 2695293.81 \\* \midrule
\multirow{2}{*}{\begin{tabular}[c]{@{}l@{}}GnuTLS\_3.3.8\_server\_full.dot\\ OUTPUT noise at 0.05\%\end{tabular}} & \CEAL & KV & (5,10) & 100 & 3939240.45 \\* \cmidrule(l){2-6} 
 & MAT & KV & (20,30) & 100 & 6560567.49 \\* \midrule
\multirow{2}{*}{\begin{tabular}[c]{@{}l@{}}GnuTLS\_3.3.8\_server\_full.dot\\ INPUT noise at 0.1\%\end{tabular}} & \CEAL & LSHARP & (10,20) & 100 & 6213300.73 \\* \cmidrule(l){2-6} 
 & MAT & RS & (20,30) & 97 & 5499339.33 \\* \midrule
\multirow{2}{*}{\begin{tabular}[c]{@{}l@{}}GnuTLS\_3.3.8\_server\_full.dot\\ OUTPUT noise at 0.1\%\end{tabular}} & \CEAL & KV & (10,20) & 100 & 10583663.38 \\* \cmidrule(l){2-6} 
 & MAT & RS & (20,30) & 22 & 4574410 \\* \midrule
\multirow{2}{*}{\begin{tabular}[c]{@{}l@{}}OpenSSL\_1.0.1g\_server\_regular.dot\\ INPUT noise at 0.01\%\end{tabular}} & MAT & KV & (5,10) & 100 & 151437.67 \\* \cmidrule(l){2-6} 
 & \CEAL & KV & (5,10) & 100 & 278231.68 \\* \midrule
\multirow{2}{*}{\begin{tabular}[c]{@{}l@{}}OpenSSL\_1.0.1g\_server\_regular.dot\\ OUTPUT noise at 0.01\%\end{tabular}} & MAT & KV & (5,10) & 100 & 124770.08 \\* \cmidrule(l){2-6} 
 & \CEAL & KV & (5,10) & 100 & 239356.33 \\* \midrule
\multirow{2}{*}{\begin{tabular}[c]{@{}l@{}}OpenSSL\_1.0.1g\_server\_regular.dot\\ INPUT noise at 0.05\%\end{tabular}} & \CEAL & LSHARP & (5,10) & 100 & 219344.8 \\* \cmidrule(l){2-6} 
 & MAT & KV & (10,20) & 100 & 313288.07 \\* \midrule
\multirow{2}{*}{\begin{tabular}[c]{@{}l@{}}OpenSSL\_1.0.1g\_server\_regular.dot\\ OUTPUT noise at 0.05\%\end{tabular}} & \CEAL & KV & (5,10) & 100 & 227970.48 \\* \cmidrule(l){2-6} 
 & MAT & KV & (10,20) & 100 & 291552.03 \\* \midrule
\multirow{2}{*}{\begin{tabular}[c]{@{}l@{}}OpenSSL\_1.0.1g\_server\_regular.dot\\ INPUT noise at 0.1\%\end{tabular}} & \CEAL & KV & (10,20) & 100 & 609691.3 \\* \cmidrule(l){2-6} 
 & MAT & KV & (20,30) & 100 & 749551.04 \\* \midrule
\multirow{2}{*}{\begin{tabular}[c]{@{}l@{}}OpenSSL\_1.0.1g\_server\_regular.dot\\ OUTPUT noise at 0.1\%\end{tabular}} & \CEAL & KV & (10,20) & 100 & 756468.39 \\* \cmidrule(l){2-6} 
 & MAT & KV & (20,30) & 77 & 560775.84 \\* \midrule
\multirow{2}{*}{\begin{tabular}[c]{@{}l@{}}DropBear.dot\\ INPUT noise at 0.01\%\end{tabular}} & MAT & RS & (5,10) & 100 & 137649.68 \\* \cmidrule(l){2-6} 
 & \CEAL & LSHARP & (5,10) & 100 & 191024.45 \\* \midrule
\multirow{2}{*}{\begin{tabular}[c]{@{}l@{}}DropBear.dot\\ OUTPUT noise at 0.01\%\end{tabular}} & MAT & LSHARP & (5,10) & 100 & 118485.35 \\* \cmidrule(l){2-6} 
 & \CEAL & LSHARP & (5,10) & 100 & 172474.12 \\* \midrule
\multirow{2}{*}{\begin{tabular}[c]{@{}l@{}}DropBear.dot\\ INPUT noise at 0.05\%\end{tabular}} & \CEAL & TTT & (5,10) & 100 & 657186.2 \\* \cmidrule(l){2-6} 
 & MAT & RS & (20,30) & 96 & 657705.43 \\* \midrule
\multirow{2}{*}{\begin{tabular}[c]{@{}l@{}}DropBear.dot\\ OUTPUT noise at 0.05\%\end{tabular}} & \CEAL & TTT & (10,20) & 100 & 746915.07 \\* \cmidrule(l){2-6} 
 & MAT & TTT & (20,30) & 91 & 749198.88 \\* \midrule
\multirow{2}{*}{\begin{tabular}[c]{@{}l@{}}DropBear.dot\\ INPUT noise at 0.1\%\end{tabular}} & \CEAL & TTT & (20,30) & 100 & 3770482.81 \\* \cmidrule(l){2-6} 
 & MAT & RS & (20,30) & 2 & 333079.5 \\* \midrule
\multirow{2}{*}{\begin{tabular}[c]{@{}l@{}}DropBear.dot\\ OUTPUT noise at 0.1\%\end{tabular}} & \CEAL & TTT & (20,30) & 100 & 3992875.08 \\* \cmidrule(l){2-6} 
 & MAT & KV & (20,30) & 2 & 247372 \\* \midrule
\multirow{2}{*}{\begin{tabular}[c]{@{}l@{}}OpenSSH.dot\\ INPUT noise at 0.01\%\end{tabular}} & \CEAL & KV & (5,10) & 100 & 4955202.9 \\* \cmidrule(l){2-6} 
 & MAT & KV & (10,20) & 100 & 5742879.77 \\* \midrule
\multirow{2}{*}{\begin{tabular}[c]{@{}l@{}}OpenSSH.dot\\ OUTPUT noise at 0.01\%\end{tabular}} & MAT & RS & (10,20) & 100 & 5354862.63 \\* \cmidrule(l){2-6} 
 & \CEAL & KV & (5,10) & 100 & 5865961.19 \\* \midrule
\multirow{2}{*}{\begin{tabular}[c]{@{}l@{}}OpenSSH.dot\\ INPUT noise at 0.05\%\end{tabular}} & \CEAL & KV & (20,30) & 100 & 19169545.12 \\* \cmidrule(l){2-6} 
 & MAT & RS & (20,30) & 17 & 3483515.06 \\* \midrule
\multirow{2}{*}{\begin{tabular}[c]{@{}l@{}}OpenSSH.dot\\ OUTPUT noise at 0.05\%\end{tabular}} & \CEAL & KV & (20,30) & 100 & 23365791.81 \\* \cmidrule(l){2-6} 
 & MAT & RS & (20,30) & 8 & 3307257.12 \\* \midrule
\multirow{2}{*}{\begin{tabular}[c]{@{}l@{}}model4.dot\\ INPUT noise at 0.01\%\end{tabular}} & MAT & TTT & (5,10) & 100 & 196191.43 \\* \cmidrule(l){2-6} 
 & \CEAL & TTT & (5,10) & 100 & 274268.45 \\* \midrule
\multirow{2}{*}{\begin{tabular}[c]{@{}l@{}}model4.dot\\ OUTPUT noise at 0.01\%\end{tabular}} & MAT & TTT & (5,10) & 100 & 202228.35 \\* \cmidrule(l){2-6} 
 & \CEAL & TTT & (5,10) & 100 & 252335.04 \\* \midrule
\multirow{2}{*}{\begin{tabular}[c]{@{}l@{}}model4.dot\\ INPUT noise at 0.05\%\end{tabular}} & \CEAL & TTT & (10,20) & 100 & 908233.13 \\* \cmidrule(l){2-6} 
 & MAT & KV & (20,30) & 100 & 1196013.58 \\* \midrule
\multirow{2}{*}{\begin{tabular}[c]{@{}l@{}}model4.dot\\ OUTPUT noise at 0.05\%\end{tabular}} & \CEAL & TTT & (10,20) & 100 & 596577.7 \\* \cmidrule(l){2-6} 
 & MAT & TTT & (20,30) & 100 & 880001.68 \\* \midrule
\multirow{2}{*}{\begin{tabular}[c]{@{}l@{}}model4.dot\\ INPUT noise at 0.1\%\end{tabular}} & \CEAL & TTT & (20,30) & 100 & 6163432.08 \\* \cmidrule(l){2-6} 
 & MAT & RS & (20,30) & 29 & 2072063.1 \\* \midrule
\multirow{2}{*}{\begin{tabular}[c]{@{}l@{}}model4.dot\\ OUTPUT noise at 0.1\%\end{tabular}} & \CEAL & TTT & (20,30) & 100 & 2565861.45 \\* \cmidrule(l){2-6} 
 & MAT & TTT & (20,30) & 47 & 932215.09 \\* \midrule
\multirow{2}{*}{\begin{tabular}[c]{@{}l@{}}model1.dot\\ INPUT noise at 0.01\%\end{tabular}} & MAT & TTT & (5,10) & 100 & 108632.49 \\* \cmidrule(l){2-6} 
 & \CEAL & TTT & (5,10) & 100 & 145286.78 \\* \midrule
\multirow{2}{*}{\begin{tabular}[c]{@{}l@{}}model1.dot\\ OUTPUT noise at 0.01\%\end{tabular}} & MAT & TTT & (5,10) & 100 & 110363.06 \\* \cmidrule(l){2-6} 
 & \CEAL & TTT & (5,10) & 100 & 157779.41 \\* \midrule
\multirow{2}{*}{\begin{tabular}[c]{@{}l@{}}model1.dot\\ INPUT noise at 0.05\%\end{tabular}} & \CEAL & TTT & (10,20) & 100 & 457462.53 \\* \cmidrule(l){2-6} 
 & MAT & TTT & (20,30) & 98 & 537245.61 \\* \midrule
\multirow{2}{*}{\begin{tabular}[c]{@{}l@{}}model1.dot\\ OUTPUT noise at 0.05\%\end{tabular}} & \CEAL & TTT & (10,20) & 100 & 852299.13 \\* \cmidrule(l){2-6} 
 & MAT & RS & (20,30) & 76 & 745257.8 \\* \midrule
\multirow{2}{*}{\begin{tabular}[c]{@{}l@{}}model1.dot\\ INPUT noise at 0.1\%\end{tabular}} & \CEAL & TTT & (20,30) & 100 & 3020350.5 \\* \cmidrule(l){2-6} 
 & MAT & RS & (20,30) & 15 & 723103.27 \\* \midrule
\multirow{2}{*}{\begin{tabular}[c]{@{}l@{}}model1.dot\\ OUTPUT noise at 0.1\%\end{tabular}} & \CEAL & TTT & (20,30) & 92 & 12212079.93 \\* \cmidrule(l){2-6} 
 & MAT & RS & (20,30) & 1 & 421758 \\* \midrule
\multirow{2}{*}{\begin{tabular}[c]{@{}l@{}}TCP\_Windows8\_Server.dot\\ INPUT noise at 0.01\%\end{tabular}} & MAT & RS & (10,20) & 100 & 1958923.06 \\* \cmidrule(l){2-6} 
 & \CEAL & TTT & (5,10) & 100 & 3732066.25 \\* \midrule
\multirow{2}{*}{\begin{tabular}[c]{@{}l@{}}TCP\_Windows8\_Server.dot\\ OUTPUT noise at 0.01\%\end{tabular}} & MAT & RS & (5,10) & 100 & 905053.04 \\* \cmidrule(l){2-6} 
 & \CEAL & TTT & (5,10) & 100 & 3473018.13 \\* \midrule
\multirow{2}{*}{\begin{tabular}[c]{@{}l@{}}TCP\_Windows8\_Server.dot\\ INPUT noise at 0.05\%\end{tabular}} & \CEAL & TTT & (10,20) & 100 & 11418676.36 \\* \cmidrule(l){2-6} 
 & MAT & RS & (20,30) & 96 & 4012156.69 \\* \midrule
\multirow{2}{*}{\begin{tabular}[c]{@{}l@{}}TCP\_Windows8\_Server.dot\\ OUTPUT noise at 0.05\%\end{tabular}} & MAT & RS & (20,30) & 100 & 4080149.32 \\* \cmidrule(l){2-6} 
 & \CEAL & KV & (10,20) & 100 & 7948493.19 \\* \midrule
\multirow{2}{*}{\begin{tabular}[c]{@{}l@{}}TCP\_Windows8\_Server.dot\\ OUTPUT noise at 0.1\%\end{tabular}} & \CEAL & TTT & (20,30) & 100 & 25669726.36 \\* \cmidrule(l){2-6} 
 & MAT & TTT & (20,30) & 7 & 3338306.29 \\* \midrule
\multirow{2}{*}{\begin{tabular}[c]{@{}l@{}}TCP\_FreeBSD\_Server.dot\\ INPUT noise at 0.01\%\end{tabular}} & \CEAL & TTT & (5,10) & 100 & 3139842.78 \\* \cmidrule(l){2-6} 
 & MAT & RS & (10,20) & 100 & 4470383.1 \\* \midrule
\multirow{2}{*}{\begin{tabular}[c]{@{}l@{}}TCP\_FreeBSD\_Server.dot\\ OUTPUT noise at 0.01\%\end{tabular}} & \CEAL & TTT & (5,10) & 100 & 3178107.17 \\* \cmidrule(l){2-6} 
 & MAT & TTT & (10,20) & 100 & 4242473.71 \\* \midrule
\begin{tabular}[c]{@{}l@{}}TCP\_FreeBSD\_Server.dot\\ INPUT noise at 0.05\%\end{tabular} & \CEAL & TTT & (20,30) & 100 & 19444312.79 \\* \midrule
\multirow{2}{*}{\begin{tabular}[c]{@{}l@{}}TCP\_FreeBSD\_Server.dot\\ OUTPUT noise at 0.05\%\end{tabular}} & \CEAL & TTT & (20,30) & 100 & 19933573.85 \\* \cmidrule(l){2-6} 
 & MAT & RS & (20,30) & 4 & 5323173.75 \\* \midrule
\multirow{2}{*}{\begin{tabular}[c]{@{}l@{}}TCP\_Linux\_Server.dot\\ INPUT noise at 0.01\%\end{tabular}} & MAT & RS & (10,20) & 100 & 3473612.19 \\* \cmidrule(l){2-6} 
 & \CEAL & TTT & (5,10) & 100 & 4344517.52 \\* \midrule
\multirow{2}{*}{\begin{tabular}[c]{@{}l@{}}TCP\_Linux\_Server.dot\\ OUTPUT noise at 0.01\%\end{tabular}} & MAT & RS & (10,20) & 100 & 3551066.4 \\* \cmidrule(l){2-6} 
 & \CEAL & TTT & (5,10) & 100 & 3687906.85 \\* \midrule
\multirow{2}{*}{\begin{tabular}[c]{@{}l@{}}TCP\_Linux\_Server.dot\\ INPUT noise at 0.05\%\end{tabular}} & \CEAL & TTT & (20,30) & 98 & 40306528.1 \\* \cmidrule(l){2-6} 
 & MAT & RS & (20,30) & 4 & 5661189 \\* \midrule
\multirow{2}{*}{\begin{tabular}[c]{@{}l@{}}TCP\_Linux\_Server.dot\\ OUTPUT noise at 0.05\%\end{tabular}} & \CEAL & TTT & (20,30) & 99 & 31894250.28 \\* \cmidrule(l){2-6} 
 & MAT & RS & (20,30) & 7 & 5675121.57 \\* \midrule
\multirow{2}{*}{\begin{tabular}[c]{@{}l@{}}model3.dot\\ INPUT noise at 0.01\%\end{tabular}} & MAT & TTT & (5,10) & 100 & 376951.11 \\* \cmidrule(l){2-6} 
 & \CEAL & TTT & (5,10) & 100 & 478557.28 \\* \midrule
\multirow{2}{*}{\begin{tabular}[c]{@{}l@{}}model3.dot\\ OUTPUT noise at 0.01\%\end{tabular}} & MAT & TTT & (5,10) & 100 & 352924.64 \\* \cmidrule(l){2-6} 
 & \CEAL & TTT & (5,10) & 100 & 460747.47 \\* \midrule
\multirow{2}{*}{\begin{tabular}[c]{@{}l@{}}model3.dot\\ INPUT noise at 0.05\%\end{tabular}} & \CEAL & TTT & (10,20) & 100 & 1502765.59 \\* \cmidrule(l){2-6} 
 & MAT & KV & (20,30) & 99 & 2196917.95 \\* \midrule
\multirow{2}{*}{\begin{tabular}[c]{@{}l@{}}model3.dot\\ OUTPUT noise at 0.05\%\end{tabular}} & MAT & TTT & (10,20) & 100 & 794580.92 \\* \cmidrule(l){2-6} 
 & \CEAL & TTT & (10,20) & 100 & 924573.34 \\* \midrule
\begin{tabular}[c]{@{}l@{}}model3.dot\\ INPUT noise at 0.1\%\end{tabular} & \CEAL & TTT & (20,30) & 31 & 37907809.55 \\* \midrule
\multirow{2}{*}{\begin{tabular}[c]{@{}l@{}}model3.dot\\ OUTPUT noise at 0.1\%\end{tabular}} & \CEAL & TTT & (10,20) & 100 & 2572752.19 \\* \cmidrule(l){2-6} 
 & MAT & TTT & (20,30) & 76 & 1730604 \\* \midrule
\multirow{2}{*}{\begin{tabular}[c]{@{}l@{}}BitVise.dot\\ INPUT noise at 0.01\%\end{tabular}} & \CEAL & TTT & (5,10) & 100 & 11049320.81 \\* \cmidrule(l){2-6} 
 & MAT & RS & (10,20) & 100 & 15036748.06 \\* \midrule
\multirow{2}{*}{\begin{tabular}[c]{@{}l@{}}BitVise.dot\\ OUTPUT noise at 0.01\%\end{tabular}} & \CEAL & KV & (5,10) & 100 & 11392077.23 \\* \cmidrule(l){2-6} 
 & MAT & TTT & (10,20) & 100 & 14827071.84 \\* \midrule
\multirow{2}{*}{\begin{tabular}[c]{@{}l@{}}BitVise.dot\\ INPUT noise at 0.05\%\end{tabular}} & \CEAL & KV & (20,30) & 91 & 69346667.59 \\* \cmidrule(l){2-6} 
 & MAT & RS & (20,30) & 1 & 6870614 \\* \midrule
\begin{tabular}[c]{@{}l@{}}BitVise.dot\\ OUTPUT noise at 0.05\%\end{tabular} & \CEAL & KV & (20,30) & 77 & 77534701.29 \\* \bottomrule
\end{longtable}
\section{Experimental Analysis: Other Findings}
\label{app:analysis}
We regroup here some considerations on our experimental results that are not necessary to answer the research question. 

\subsection{Revision ratio}
\CEAL requires that all the information is eventually re-checked to ensure that no (potentially noisy) observation can remain unchecked. In our case this is ensured by the m-complete test suites we used, because we consider a black-box checking with Mealy machines, where all prefixes of tested words are also tested (by causality). 

For other use cases, it can be necessary to ensure that in the \(\Test_{\BS}\) function, a given ratio $\gamma$ of tests is spent rechecking the oldest information stored in \(\BS\).

We implemented this ratio and duplicated (part of) the experiments with a revision ratio (proportion of tests spent rechecking old information) taken in $(0.2, 0.4, 0.6, 0.8, 0.99)$.
We found that in our specific experiments, \CEAL's revision ratio parameter ($\gamma$) did not provide statistically significant differences in success rates. In particular, for the cases of learning Mealy machines using nondeterministic equivalence testing methods such as Hybrid-ADS, we found that this parameter can be constantly set to 0. 
This may, however, not be the case were we to use different types of equivalence algorithms, or if we were not learning Mealy machines.

\subsection{Influence of the alphabet}
Target size is not a sole determining factor in learning difficulty. For example, in the success graphs for level 0.05\% and 0.1\% (Figs. \ref{fig:result-INPUT-0.05-success}, \ref{fig:result-INPUT-0.1-success}, \ref{fig:result-OUTPUT-0.05-success}, \ref{fig:result-OUTPUT-0.1-success}), we can see that the smallest 4 or so targets prove to be harder to learn with noise than the following dozen or so, despite being smaller. This is because these smaller targets have quite big alphabets, in fact their alphabets are twice the size of the following, well performing targets. This is expected as the alphabet size also increases the number of queries that need to be done, and thus the chance for noisy queries.

\subsection{Efficiency gain}
Looking at efficiency, we have also found that in the cases where system test counts can be compared (i.e., the resulting runs have meaningful success rates), \CEAL provides an average reduction of 6\% in test use, when compared to MAT. This suggests that \CEAL not only pushes the boundary of learning in noisy environments, it does so efficiently, even in cases where MAT is at its best.
}
\end{document}